\pdfoutput=1
\documentclass[11pt]{article}

\usepackage[margin=1in]{geometry}
\usepackage[numbers,sort&compress]{natbib}
\usepackage{amsmath,amssymb,amsthm}
\usepackage{bm}
\usepackage{algorithm}
\usepackage{algpseudocode}
\usepackage{authblk}

\usepackage{booktabs}
\usepackage{multirow}
\usepackage{graphicx}
\usepackage[table]{xcolor}
\usepackage{colortbl}
\usepackage{caption}
\usepackage{hyperref}
\usepackage{float}

\newtheorem{theorem}{Theorem}
\newtheorem{lemma}{Lemma}
\newtheorem{remark}{Remark}
\newtheorem{assumption}{Assumption}

\algrenewcommand{\algorithmiccomment}[1]{\hfill\textcolor{gray}{\(\triangleright\) \textit{#1}}}

\title{$Z^2$-Sampling: Zero-Cost Zigzag Trajectories for Semantic Alignment in Diffusion Models}
\author[1]{Haosen Li$^{*,}$}
\author[1]{Wenshuo Chen$^{*\dagger}$}
\author[1]{Shaofeng Liang}
\author[2]{Lei Wang}
\author[1]{Kaishen Yuan}
\author[1]{Yutao Yue$^{\dagger}$}

\affil[1]{The Hong Kong University of Science and Technology (Guangzhou)}
\affil[2]{Griffith University \& Data61/CSIRO}
\affil[]{\small $^{*}$Equal contribution.\quad $^{\dagger}$Corresponding author: wchen179@connect.hkust-gz.edu.cn, yutaoyue@hkust-gz.edu.cn}

\date{}

\begin{document}

\maketitle

\begin{figure}[H]
  \centering
  \includegraphics[width=\textwidth]{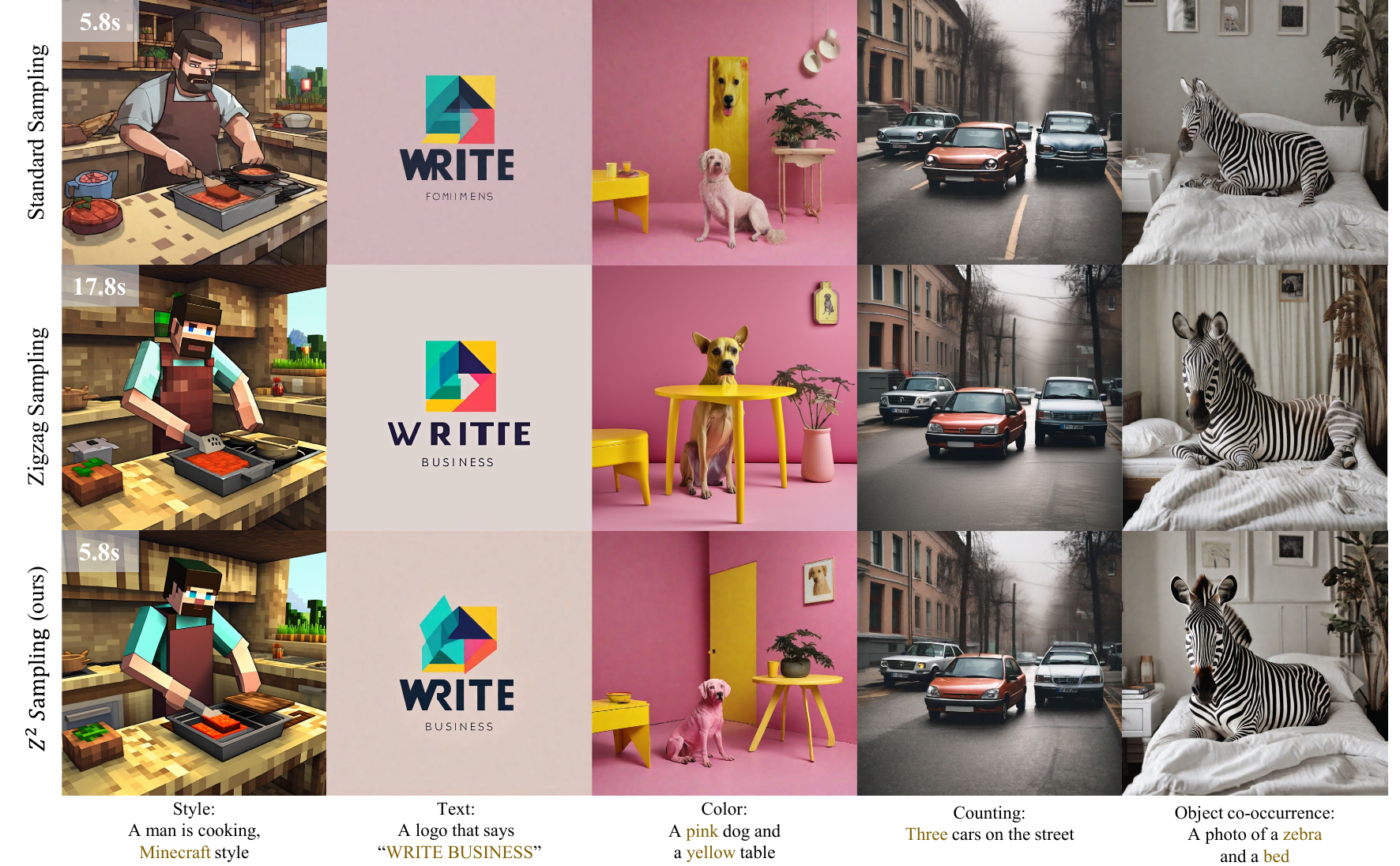} 
  \caption{Visualizing the structural paradigm shift: Explicit Z-Sampling (middle) incurs a 3$\times$ computational cost to physically traverse the intermediate states, suffering from off-manifold spatial approximation error $\tau(t)$. In contrast, $Z^2$-Sampling (bottom) algebraically collapses the zigzag trajectory into a single step, eliminating positional mismatch and reducing the cost back to the 2-NFE baseline via a zero-cost Temporal Semantic Surrogate.}
  \label{fig:teaser}
\end{figure}

\begin{abstract}

Diffusion models have achieved unprecedented success in text-aligned generation, largely driven by Classifier-Free Guidance (CFG). However, standard CFG operates strictly on instantaneous gradients, omitting the intrinsic curvature of the data manifold. Recent methods like Zigzag-sampling (Z-Sampling) explicitly traverse multi-step forward-backward trajectories to probe this curvature, significantly improving semantic alignment. Yet, these explicit traversals triple the Neural Function Evaluation (NFE) cost and introduce unconstrained truncation errors from off-manifold evaluations, causing cumulative drift from the true marginal distribution. 

In this paper, we theoretically demonstrate that the explicit zigzag sequence is topologically reducible. We propose \textit{Implicit Z-Sampling}, rigorously proving that intermediate states can be algebraically annihilated via operator dualities, physically eliminating off-manifold approximation errors. To push sampling efficiency to its theoretical lower bound, we introduce \textbf{$Z^2$-Sampling (Zero-cost Zigzag Sampling)}. Exploiting the Probability Flow ODE's temporal coherence, $Z^2$-Sampling couples implicit algebraic collapse with a dynamically cached \textit{Temporal Semantic Surrogate}. This restores the standard 2-NFE baseline without sacrificing semantic exploration. We formally prove via Backward Error Analysis that this discrete collapse inherently synthesizes a directional derivative curvature penalty. Finally, extensive evaluations demonstrate that $Z^2$-Sampling structurally shatters the performance-efficiency Pareto frontier. We validate its universal applicability across diverse architectures (U-Nets, DiTs) and modalities (image/video), establishing seamless orthogonality with advanced alignment frameworks (AYS, Diffusion-DPO).
\end{abstract}

\section{Introduction}
\label{sec:intro}

Diffusion models \cite{chen_2024, Chen_2025, chen2025polarisprojectionorthogonalsquaresrobust, yuan2025coemogensemanticallycoherentscalableemotional, ning2025dctdiffintriguingpropertiesimage, jia2025physicsinformedrepresentationalignmentsparse} have redefined the landscape of text-to-image ~\cite{ho2020denoisingdiffusionprobabilisticmodels, song2021scorebasedgenerativemodelingstochastic, song2022denoisingdiffusionimplicitmodels, lipman2023flowmatchinggenerativemodeling, liu2022flowstraightfastlearning, rombach2022highresolutionimagesynthesislatent, podell2023sdxlimprovinglatentdiffusion} and text-to-video \cite{blattmann2023stablevideodiffusionscaling, liu2024sorareviewbackgroundtechnology, singer2022makeavideotexttovideogenerationtextvideo, ho2022videodiffusionmodels} generation. The cornerstone of this success is Classifier-Free Guidance (CFG) \cite{ho2022classifierfreediffusionguidance}, which steers the generative trajectory toward text-aligned data manifolds by extrapolating score estimates. However, standard CFG is ``myopic'': it operates on the instantaneous gradient at the current latent state. Because the true data manifold of the Probability Flow ODE is highly non-linear \cite{chung2024cfgmanifoldconstrainedclassifierfree, karras2022elucidatingdesignspacediffusionbased}, localized linear extrapolation often struggles to foresee complex geometric shifts \cite{liu2023compositionalvisualgenerationcomposable}, leading to suboptimal alignment in dense compositional prompts.

To break this limitation, advanced sampling strategies, notably Zigzag Diffusion Sampling (Z-Sampling) \cite{bai2024zigzagdiffusionsamplingdiffusion} , oscillate the latent trajectory to probe local curvature \cite{song2021scorebasedgenerativemodelingstochastic}. By taking a forward step with a large guidance scale and a backward inversion step with a smaller scale, this lookahead-and-return mechanism \cite{lugmayr2022repaintinpaintingusingdenoising, xu2023restartsamplingimprovinggenerative} accumulates semantic foresight \cite{lu2022dpmsolverfastodesolver, Lu_2025} and enhances structural coherence.

However, explicitly traversing the latent space introduces two limitations. First is the computational overhead: physical steps require sequential network queries, tripling inference costs per step. Second, explicit Z-Sampling \cite{bai2024zigzagdiffusionsamplingdiffusion} queries the network from an intermediate, uncorrected state. Because large guidance scales push this intermediate state off the true data manifold \cite{chung2024cfgmanifoldconstrainedclassifierfree, ho2022classifierfreediffusionguidance, karras2022elucidatingdesignspacediffusionbased}, evaluating the network at this out-of-distribution point introduces a persistent spatial error. This error accumulates over time, pushing latents away from the marginal distribution and causing numerical drift, structural artifacts, and color oversaturation \cite{saharia2022photorealistictexttoimagediffusionmodels, lin2024commondiffusionnoiseschedules}.

Can we leverage the semantic benefits of zigzag exploration without incurring computational overhead and off-manifold drift? Inspired by recent advances in direct ODE inversion \cite{ju2023directinversionboostingdiffusionbased}—which recycle latent noise to mitigate inversion errors during image editing—we address this question through the principle of \textbf{Exact Noise Reuse}. We propose \textbf{Implicit Z-Sampling}, which, instead of evaluating the network at out-of-distribution intermediate states, deliberately recycles the unconditional noise evaluated at the exact on-manifold anchor. Crucially, anchoring the inversion to this reused noise analytically cancels out the intermediate physical steps via the intrinsic properties of deterministic ODE solvers. This analytical simplification collapses the two-step oscillation into a single deterministic translation, strictly eliminating the off-manifold approximation error.

To maximize sampling efficiency, we introduce \textbf{$Z^2$-Sampling (Zero-cost Zigzag Sampling)}. While Implicit Z-Sampling eliminates spatial error, $Z^2$-Sampling removes computational redundancy via \textit{temporal noise reuse}. Leveraging the temporal coherence of continuous diffusion trajectories—a property widely exploited in multi-step ODE solvers \cite{lu2022dpmsolverfastodesolver, zhao2023unipcunifiedpredictorcorrectorframework}—we design a dynamically cached \textit{Temporal Semantic Surrogate} that decouples spatial curvature evaluation from the main integration step. Substituting exact spatial noise differences with this cached proxy restores the standard baseline of one CFG evaluation (2 NFEs) per step, preserving non-linear semantic exploration. Furthermore, via Backward Error Analysis (BEA) \cite{Hairer2006}, we formally prove that $Z^2$-Sampling synthesizes a modified \textit{Effective Vector Field}. It naturally incorporates a directional derivative penalty that acts as a zero-cost geometric regularization, dynamically penalizing trajectory curvature.

Empirically, $Z^2$-Sampling analytically eliminates off-manifold degradation, structurally breaking the conventional performance-efficiency trade-off. Evaluated at its absolute minimum time budget ($\sim$6s), it decisively outperforms the peak generation quality of explicit Z-Sampling ($\sim$18s). Comprehensive evaluations demonstrate its universal applicability across diverse architectures, from standard U-Nets (SDXL \cite{podell2023sdxlimprovinglatentdiffusion}) to Diffusion Transformers (Hunyuan-DiT \cite{li2024hunyuandit}). Furthermore, it naturally extends to temporally coherent video synthesis (CogVideoX-2B \cite{yang2024cogvideox}, ModelScope-1.7B \cite{wang2023modelscope}) and establishes seamless orthogonal compatibility with both training-based (Diffusion-DPO \cite{wallace2023diffusionmodelalignmentusing}) and training-free (AYS \cite{sabour2024alignstepsoptimizingsampling}) alignment paradigms.

\section{Preliminaries and Problem Formulation}
\label{sec:preliminaries}

\textbf{First-Order Solvers \& CFG.} Deterministic solvers discretize the Probability Flow ODE into affine transitions. The forward step $\Phi^t$ and its exact inverse $\Psi^t$ map the latent state via a prediction $\bm{\epsilon}_\theta^t(\bm{x}_t)$:
\begin{align}
    \Phi^t(\bm{x}_t; \bm{\epsilon}) &\triangleq A_t \bm{x}_t + B_t \bm{\epsilon} = \bm{x}_{t-1} \label{eq:phi_fwd} \\
    \Psi^t(\bm{x}_{t-1}; \bm{\epsilon}) &\triangleq A_t^{-1} \bm{x}_{t-1} + C_t \bm{\epsilon} = \bm{x}_t \label{eq:psi_inv}
\end{align}
This unified formulation is quite versatile, as it naturally recovers several standard sampler geometries depending on the choice of coefficients. For instance, in the standard DDIM \cite{song2022denoisingdiffusionimplicitmodels} case ($\bm{\epsilon}$-prediction), we set $A_t = \sqrt{\alpha_{t-1}/\alpha_t}$ and $B_t = \sqrt{1-\alpha_{t-1}} - A_t \sqrt{1-\alpha_t}$. If we opt for the Euler method in velocity prediction \cite{lipman2023flowmatchinggenerativemodeling}, the rule reduces to a much simpler form with $A_t = 1$ and $B_t = \sigma_{t-1} - \sigma_t$. Furthermore, the framework extends to spherical geometries \cite{kingma2023variationaldiffusionmodels} ($\bm{v}$-prediction); by defining the angle as $\theta_t = \arccos(\sqrt{\alpha_t})$, the coefficients become purely trigonometric, yielding $A_t = \cos(\Delta\theta)$ and $B_t = \sin(\Delta\theta)$.

Standard CFG \cite{ho2022classifierfreediffusionguidance} extrapolates locally via $\bm{\epsilon}_\theta^t(\bm{x}_t, \gamma) = \bm{\epsilon}_\theta^t(\bm{x}_t, 0) + \gamma \Delta \bm{\epsilon}_\theta^t(\bm{x}_t)$, inherently ignoring manifold curvature.

\textbf{Z-Sampling \& Spatial Error.} \cite{bai2024zigzagdiffusionsamplingdiffusion} Explicit Z-Sampling probes curvature via a lookahead-and-return path:
\begin{align}
    \bm{x}_{t-1} &= \Phi^t\big(\bm{x}_t; \; \bm{\epsilon}_\theta^t(\bm{x}_t, \gamma_1)\big) \label{eq:z_fwd} \\
    \tilde{\bm{x}}_t &= \Psi^t\big(\bm{x}_{t-1}; \; \bm{\epsilon}_\theta^t(\bm{x}_{t-1}, \gamma_2)\big) \label{eq:z_inv}
\end{align}
While the original authors characterize an inherent inversion approximation error $\tau_2(t)$ within the score space, we identify a more fundamental geometric flaw. The large CFG forward step inherently pushes the intermediate state $\bm{x}_{t-1}$ off the true data manifold. Physically evaluating the network at this out-of-distribution state diverges from the pristine on-manifold unconditional noise $\bm{\epsilon}_\theta^t(\bm{x}_t, \gamma_2)$ (assuming $\gamma_2=0$ for standard inversion). This divergence translates into a persistent spatial error $\tau(t)$ upon inversion:
\begin{equation}
    \tau(t) = \Psi^t\big(\bm{x}_{t-1}; \; \bm{\epsilon}_\theta^t(\bm{x}_{t-1}, \gamma_2)\big) - \Psi^t\big(\bm{x}_{t-1}; \; \bm{\epsilon}_\theta^t(\bm{x}_t, \gamma_2)\big)
\end{equation}
\textit{Our $Z^2$-Sampling annihilates this positional mismatch ($\tau(t) \equiv 0$).}

\begin{figure*}[t]
\centering

\begin{minipage}[t]{0.44\textwidth}
    \vspace{0pt}
    \centering
    \includegraphics[width=\linewidth]{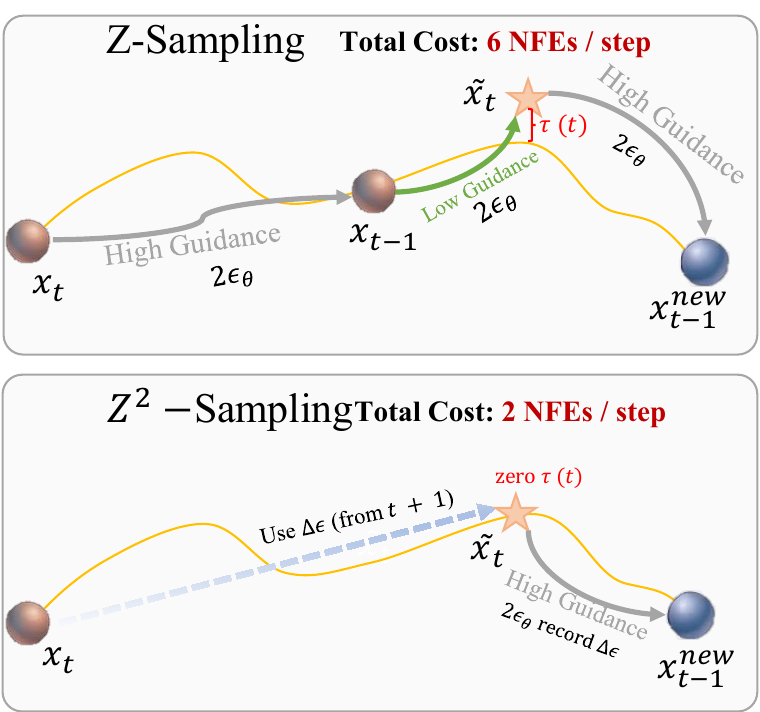}
\caption{\textbf{Methodological comparison of explicit Z-Sampling and $Z^2$-Sampling.}
\textbf{Top:} Explicit Z-Sampling traverses a lookahead-and-return path, requiring three CFG evaluations (6 NFEs per step).
The intermediate state is pushed off the true data manifold, introducing a persistent spatial approximation error $\tau(t)$ during inversion.
\textbf{Bottom:} $Z^2$-Sampling algebraically collapses the intermediate traversal and reuses a cached Temporal Semantic Surrogate ($\Delta \bm{\epsilon}$ from $t+1$).
This removes $\tau(t)$ and restores the standard 2-NFE computational cost.}
    \label{fig:method_z2}
\end{minipage}
\hfill
\begin{minipage}[t]{0.53\textwidth}
    \vspace{0pt}
    \captionsetup{type=algorithm}
    \caption{Z-Sampling versus $Z^2$-Sampling}
    \label{alg:comparison}
    \footnotesize

    \textbf{Z-Sampling}
    \begin{algorithmic}[1]
        \State \textbf{Input:} Fwd $\Phi^t$, Inv $\Psi^t$, Score $\bm{\epsilon}_\theta$, scales $\gamma_1, \gamma_2$, zigzag $\lambda$
        \State \textbf{Initialize:} $\bm{x}_T \sim \mathcal{N}(\mathbf{0}, \mathbf{I})$
        \For{$t = T$ \textbf{down to} $1$}
            \State $\bm{\epsilon}_t \gets \bm{\epsilon}_\theta^t(\bm{x}_t, \gamma_1)$ \Comment{\textbf{\textcolor{red}{(2 NFEs)}}}
            \State $\bm{x}_{t-1} \gets \Phi^t(\bm{x}_t ; \bm{\epsilon}_t)$ \Comment{\textit{1. First Denoising}}
            \If{$t > T - \lambda$} \Comment{\textit{Zigzag-Phase ($\lambda$ steps)}}
                \State $\bm{\epsilon}'_t \gets \bm{\epsilon}_\theta^t(\bm{x}_{t-1}, \gamma_2)$ \Comment{\textbf{\textcolor{red}{(2 NFEs)}}}
                \State $\tilde{\bm{x}}_t \gets \Psi^t(\bm{x}_{t-1}; \bm{\epsilon}'_t)$ \Comment{\textit{2. Inversion}}
                \State $\tilde{\bm{\epsilon}}_t \gets \bm{\epsilon}_\theta^t(\tilde{\bm{x}}_t, \gamma_1)$ \Comment{\textbf{\textcolor{red}{(2 NFEs)}}}
                \State $\bm{x}_{t-1}^{\mathrm{new}} \gets \Phi^t(\tilde{\bm{x}}_t; \tilde{\bm{\epsilon}}_t)$ \Comment{\textit{3. Re-denoising}}
                \State $\bm{x}_{t-1} \gets \bm{x}_{t-1}^{\mathrm{new}}$
            \EndIf
        \EndFor
        \State \textbf{return} $\bm{x}_0$
    \end{algorithmic}

    \vspace{0.8em}
    \hrule
    \vspace{0.8em}

    \textbf{$Z^2$-Sampling (Ours)}
    \begin{algorithmic}[1]
        \State \textbf{Input:} Fwd $\Phi^t$, Score $\bm{\epsilon}_\theta$, scale $\gamma_1$, warmup $W$, zigzag $\lambda$
        \State \textbf{Initialize:} $\bm{x}_T \sim \mathcal{N}(\mathbf{0}, \mathbf{I}), \ \Delta \bm{\epsilon} \gets \mathbf{0}$
        \For{$t = T$ \textbf{down to} $1$}
            \If{$T - W - \lambda < t \le T - W$} \Comment{\textit{Zigzag-Phase ($\lambda$ steps)}}
                \State $\tilde{\bm{x}}_t \gets \bm{x}_t - C_t \gamma_1 \Delta \bm{\epsilon}$
                \State $\tilde{\bm{\epsilon}}_t \gets \bm{\epsilon}_\theta^t(\tilde{\bm{x}}_t, \gamma_1)$ \Comment{\textbf{\textcolor{blue}{(2 NFEs)}}}
                \State $\bm{x}_{t-1}^{\mathrm{new}} \gets \Phi^t(\tilde{\bm{x}}_t; \tilde{\bm{\epsilon}}_t)$
                \State $\bm{x}_{t-1} \gets \bm{x}_{t-1}^{\mathrm{new}}$
                \State $\Delta \bm{\epsilon} \gets \Delta \bm{\epsilon}_\theta^t(\tilde{\bm{x}}_t)$ \Comment{\textbf{\textcolor{teal}{Update cache}}}
            \Else \Comment{\textit{Warmup \& Standard}}
                \State $\bm{\epsilon}_t \gets \bm{\epsilon}_\theta^t(\bm{x}_t, \gamma_1)$ \Comment{\textit{Standard CFG} \textbf{(2 NFEs)}}
                \State $\Delta \bm{\epsilon} \gets \Delta \bm{\epsilon}_\theta^t(\bm{x}_t)$ \Comment{\textbf{\textcolor{teal}{Init delta noise}}}
                \State $\bm{x}_{t-1} \gets \Phi^t(\bm{x}_t; \bm{\epsilon}_t)$ \Comment{\textit{Standard Denoising}}
            \EndIf
        \EndFor
        \State \textbf{return} $\bm{x}_0$
    \end{algorithmic}
\end{minipage}

\end{figure*}

\section{Methodology}
\label{sec:method}

\textbf{Roadmap.} We formalize our approach by: (1) Proving the algebraic collapse of spatial trajectories via exact noise reuse (Sec.~\ref{sec:implicit_z_sampling}); (2) Establishing \textbf{$Z^2$-Sampling} by bounding the Local Truncation Error of our Temporal Semantic Surrogate (Sec.~\ref{sec:z2_sampling}); and (3) Deriving the underlying Effective Vector Field to reveal the inherent continuous-time curvature penalty (Sec.~\ref{sec:generalization_ode}).

\subsection{Trajectory Collapse via Exact Noise Reuse}
\label{sec:implicit_z_sampling}

The fundamental flaw of explicit Z-Sampling lies in its spatial traversal: evaluating the inversion trajectory from the explicitly shifted intermediate state $\bm{x}_{t-1}$ forces the network to operate off-manifold, injecting an unconstrained approximation error $\tau(t)$.

To eradicate this deviation, we propose the principle of \textbf{Exact Noise Reuse}. Instead of moving to the uncharted state $\bm{x}_{t-1}$ to estimate inversion noise, we recycle the pristine unconditional noise $\bm{\epsilon}_\theta^t(\bm{x}_t, \gamma_2)$ evaluated at the on-manifold anchor $\bm{x}_t$. Anchoring the inversion to this reused noise circumvents the positional mismatch. 

This deliberate algorithmic choice of noise reuse triggers an elegant mathematical consequence: the algebraic dualities of the ODE solver completely annihilate the intermediate physical traversal.

Let $\bm{\epsilon}_\theta^t(\bm{x}, \gamma) \triangleq \bm{\epsilon}_\theta^t(\bm{x}, 0) + \gamma \Delta \bm{\epsilon}_\theta^t(\bm{x})$ and logically set $\gamma_2 = 0$. 

\begin{lemma}
    \label{lemma:involution}
  Deterministic affine solvers satisfy algebraic dualities:
    \begin{equation}
        \text{(i)} \;\; A_t^{-1} B_t \equiv -C_t \quad \text{and} \quad \text{(ii)} \;\; A_t C_t \equiv -B_t
    \end{equation}
\end{lemma}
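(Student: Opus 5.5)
The plan is to derive both identities directly from the defining property of $\Psi^t$ in \eqref{eq:psi_inv}, namely that it is the \emph{exact} inverse of $\Phi^t$ in \eqref{eq:phi_fwd} whenever the same noise $\bm{\epsilon}$ is used in both maps. Concretely, I will require
\[
\Psi^t\big(\Phi^t(\bm{x}_t;\bm{\epsilon});\bm{\epsilon}\big)=\bm{x}_t \quad\text{for all } \bm{x}_t,\bm{\epsilon}.
\]
This is the only structural input I need. It is also the sense in which the coefficient $C_t$ is fixed: for DDIM-type inversion with a shared $\bm{\epsilon}$, $C_t$ is determined by this requirement rather than being an independent parameter.

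\textbf{Step 1 (identity (i)).} Substituting \eqref{eq:phi_fwd} into \eqref{eq:psi_inv} gives
\[
A_t^{-1}(A_t\bm{x}_t + B_t\bm{\epsilon}) + C_t\bm{\epsilon} = \bm{x}_t + (A_t^{-1}B_t + C_t)\bm{\epsilon}.
\]
For this to equal $\bm{x}_t$ for every $\bm{\epsilon}$, the coefficient must vanish, so $A_t^{-1}B_t + C_t = 0$. That is exactly $A_t^{-1}B_t \equiv -C_t$.

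\textbf{Step 2 (identity (ii)).} Multiply (i) on the left by $A_t$. This requires $A_t$ to be invertible, which is already implicit in writing $A_t^{-1}$ in \eqref{eq:psi_inv}. The result is $B_t = -A_tC_t$, which is (ii). Equivalently, (ii) follows from the opposite composition $\Phi^t(\Psi^t(\bm{x}_{t-1};\bm{\epsilon});\bm{\epsilon})=\bm{x}_{t-1}$, which gives $A_tC_t + B_t = 0$. I will mention this second route to show that the two dualities express the two-sided inverse property.

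\textbf{Step 3 (consistency with the listed samplers).} For the Euler/flow case, $A_t=1$ forces $C_t=-(\sigma_{t-1}-\sigma_t)=\sigma_t-\sigma_{t-1}$. For DDIM, $C_t = -B_t/A_t = \sqrt{1-\alpha_t} - \sqrt{\alpha_t/\alpha_{t-1}}\sqrt{1-\alpha_{t-1}}$, which is the standard DDIM inversion coefficient. I will verify these by substitution as a sanity check.

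\textbf{Main obstacle.} The delicate point is interpretive. The identities hold \emph{identically} only because $\Psi^t$ is defined with the same noise argument, i.e. as an exact inverse. Practical DDIM inversion evaluates the noise at a different point, and that mismatch is exactly the $\tau(t)$ discussed above. I will therefore state explicitly that the lemma concerns the coefficient maps with a shared $\bm{\epsilon}$, and that the coefficients are scalars, so they commute and left/right multiplication is immaterial.
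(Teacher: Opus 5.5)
Your proposal is correct and matches the paper's proof step for step. You substitute $\Phi^t$ into $\Psi^t$, use that $\Psi^t(\Phi^t(\bm{x}_t;\bm{\epsilon});\bm{\epsilon})=\bm{x}_t$ holds for every $\bm{\epsilon}$ to force $A_t^{-1}B_t+C_t=0$, and then left-multiply by $A_t$ to get (ii). Your extras are consistent with the paper but not needed: the reverse composition gives $A_tC_t+B_t=0$ directly, and your DDIM and Euler coefficient checks agree with the paper's appendix.
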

\begin{proof}
  By definition, the exact inverse satisfies:
    \begin{align}
        \bm{x}_t &\equiv \Psi^t\big(\Phi^t(\bm{x}_t; \bm{\epsilon}); \bm{\epsilon}\big) \nonumber = A_t^{-1} \big( A_t \bm{x}_t + B_t \bm{\epsilon} \big) + C_t \bm{\epsilon} \nonumber \\
        &= \bm{x}_t + (A_t^{-1} B_t + C_t) \bm{\epsilon}
    \end{align}
    Since this holds $\forall \bm{\epsilon} \in \mathbb{R}^d$, we get $A_t^{-1} B_t + C_t = 0 \implies$ (i). Left-multiplying by $A_t$ yields (ii).
\end{proof}

\begin{remark}
    \label{rem:universal_c}
    This duality ($C_t \equiv -A_t^{-1}B_t$) is a topological constraint, unifying VP, Euler, and Spherical solvers without ad-hoc derivations.
\end{remark}

\begin{theorem}
    \label{thm:exact_semantic}
    Let $\bm{\epsilon}_1 = \bm{\epsilon}_\theta^t(\bm{x}_t, \gamma_1)$ be the forward guided noise, and \textbf{let $\bm{\epsilon}_2$ strictly reuse the unconditional noise evaluated at the anchor}: $\bm{\epsilon}_2 \triangleq \bm{\epsilon}_\theta^t(\bm{x}_t, \gamma_2)$. Assuming $\gamma_2=0$, under this exact noise reuse, the lookahead-and-return operator $\mathcal{C}^t(\bm{x}_t) \triangleq \Psi^t\big( \Phi^t(\bm{x}_t; \bm{\epsilon}_1); \bm{\epsilon}_2 \big)$ collapses into a deterministic single-step translation:
    \begin{equation}
        \tilde{\bm{x}}_t = \mathcal{C}^t(\bm{x}_t) = \bm{x}_t - C_t \gamma_1 \Delta \bm{\epsilon}_\theta^t(\bm{x}_t)
    \end{equation}
    Consequently, spatial approximation error strictly vanishes ($\tau(t) \equiv 0$).
\end{theorem}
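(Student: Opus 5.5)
The plan is a direct algebraic substitution. First compose the two affine maps, then use Lemma~\ref{lemma:involution} to cancel the $\bm{x}_t$-independent part of the round trip.

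\textbf{Expanding the composition.} Write $\bm{\epsilon}_1 = \bm{\epsilon}_\theta^t(\bm{x}_t,0) + \gamma_1 \Delta\bm{\epsilon}_\theta^t(\bm{x}_t)$. Under exact noise reuse with $\gamma_2 = 0$, the inversion noise is $\bm{\epsilon}_2 = \bm{\epsilon}_\theta^t(\bm{x}_t,0)$. Substituting \eqref{eq:phi_fwd} into \eqref{eq:psi_inv} gives
\begin{equation*}
\mathcal{C}^t(\bm{x}_t) = A_t^{-1}\big(A_t\bm{x}_t + B_t\bm{\epsilon}_1\big) + C_t\bm{\epsilon}_2 = \bm{x}_t + A_t^{-1}B_t\,\bm{\epsilon}_1 + C_t\,\bm{\epsilon}_2 .
\end{equation*}

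\textbf{Applying the duality.} By Lemma~\ref{lemma:involution}(i), $A_t^{-1}B_t = -C_t$, so the expression becomes $\bm{x}_t - C_t(\bm{\epsilon}_1 - \bm{\epsilon}_2)$. The unconditional parts of $\bm{\epsilon}_1$ and $\bm{\epsilon}_2$ are the same vector, because both are evaluated at the same anchor $\bm{x}_t$. They therefore cancel, leaving $\bm{\epsilon}_1 - \bm{\epsilon}_2 = \gamma_1 \Delta\bm{\epsilon}_\theta^t(\bm{x}_t)$, which is the claimed translation. The intermediate state $\bm{x}_{t-1}$ enters only through $A_t\bm{x}_t$, and that term is inverted exactly. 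No network evaluation at $\bm{x}_{t-1}$ is needed, so the collapse into a single translation is genuine.

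\textbf{The error claim.} The spatial error $\tau(t)$ is defined as the difference between inverting with $\bm{\epsilon}_\theta^t(\bm{x}_{t-1},\gamma_2)$ and inverting with $\bm{\epsilon}_\theta^t(\bm{x}_t,\gamma_2)$. Under exact noise reuse the scheme uses the second noise by construction, so its own $\tau(t)$ is the difference of two identical terms and is $\equiv 0$.

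\textbf{Expected obstacle.} The only delicate points are interpretive rather than computational:
\begin{itemize}
\item $A_t$ must be invertible. This is implicit in the existence of $\Psi^t$.
\item If $A_t, B_t, C_t$ are matrices rather than scalars, the identity $A_t^{-1}B_t = -C_t$ must hold as an operator identity. The lemma already gives this, since it is derived for all $\bm{\epsilon}$.
\end{itemize}
I would state both points explicitly and then present the two-line computation.
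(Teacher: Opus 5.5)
Your proposal is correct and follows the paper's proof essentially line for line: you compose the affine maps, invoke Lemma~\ref{lemma:involution}(i) to reach $\bm{x}_t - C_t(\bm{\epsilon}_1-\bm{\epsilon}_2)$, and cancel the shared unconditional term $\bm{\epsilon}_\theta^t(\bm{x}_t,0)$ to leave $\gamma_1\Delta\bm{\epsilon}_\theta^t(\bm{x}_t)$. Your remark that $\tau(t)\equiv 0$ holds by construction, since the reused noise is exactly the second inversion noise in the definition of $\tau(t)$, just spells out what the paper leaves as ``consequently.''
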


\begin{proof}
    Expanding sequentially and invoking Lemma~\ref{lemma:involution}(i):
    \begin{align}
        \tilde{\bm{x}}_t &= A_t^{-1} \big( A_t \bm{x}_t + B_t \bm{\epsilon}_1 \big) + C_t \bm{\epsilon}_2 \nonumber \\
        &= \bm{x}_t + (A_t^{-1} B_t) \bm{\epsilon}_1 + C_t \bm{\epsilon}_2 \nonumber = \bm{x}_t - C_t (\bm{\epsilon}_1 - \bm{\epsilon}_2)
    \end{align}
  Substituting $\bm{\epsilon}_1 - \bm{\epsilon}_2 \equiv \gamma_1 \Delta \bm{\epsilon}_\theta^t(\bm{x}_t)$ completes the proof.
\end{proof}

\begin{remark}
    \textbf{Physical Essence vs. Mathematical Form.} Algebraic dualities act merely as mathematical vehicles. The zero spatial error ($\tau(t) \equiv 0$) fundamentally stems from \textbf{reusing the exact on-manifold noise}. This renders the intermediate state $\bm{x}_{t-1}$ a factored-out dummy variable, physically bypassing off-manifold errors rather than merely mitigating them.
\end{remark}

\begin{theorem}
    \label{thm:ultimate_collapse}
    The integration $\bm{x}_{t-1}^{\mathrm{new}} = \Phi^t(\tilde{\bm{x}}_t; \tilde{\bm{\epsilon}}_t)$, where $\tilde{\bm{\epsilon}}_t \triangleq \bm{\epsilon}_\theta^t(\tilde{\bm{x}}_t, \gamma_1)$, is isomorphic to a forward step initiated from $\bm{x}_t$:
    \begin{equation}
        \bm{x}_{t-1}^{\mathrm{new}} = \Phi^t\Big(\bm{x}_t; \; \tilde{\bm{\epsilon}}_t + \gamma_1 \Delta \bm{\epsilon}_\theta^t(\bm{x}_t) \Big)
    \end{equation}
\end{theorem}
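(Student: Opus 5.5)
The plan is to expand the forward step $\Phi^t$ at the collapsed state from Theorem~\ref{thm:exact_semantic}, and then fold the resulting translation back into the noise argument using Lemma~\ref{lemma:involution}(ii). Since $\Phi^t$ is affine, it suffices to collect the coefficients of $\bm{x}_t$ and of the noise.

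\textbf{Step 1.} By Theorem~\ref{thm:exact_semantic}, $\tilde{\bm{x}}_t = \bm{x}_t - C_t \gamma_1 \Delta \bm{\epsilon}_\theta^t(\bm{x}_t)$. Substituting this into \eqref{eq:phi_fwd} gives
\begin{equation*}
\bm{x}_{t-1}^{\mathrm{new}} = A_t \tilde{\bm{x}}_t + B_t \tilde{\bm{\epsilon}}_t = A_t \bm{x}_t - A_t C_t \gamma_1 \Delta \bm{\epsilon}_\theta^t(\bm{x}_t) + B_t \tilde{\bm{\epsilon}}_t .
\end{equation*}

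\textbf{Step 2.} Apply Lemma~\ref{lemma:involution}(ii), $A_t C_t \equiv -B_t$, to the middle term. It becomes $+B_t \gamma_1 \Delta \bm{\epsilon}_\theta^t(\bm{x}_t)$. Grouping the two $B_t$ terms yields
\begin{equation*}
A_t \bm{x}_t + B_t\big(\tilde{\bm{\epsilon}}_t + \gamma_1 \Delta \bm{\epsilon}_\theta^t(\bm{x}_t)\big) = \Phi^t\big(\bm{x}_t;\, \tilde{\bm{\epsilon}}_t + \gamma_1 \Delta \bm{\epsilon}_\theta^t(\bm{x}_t)\big),
\end{equation*}
which is the claim.

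\textbf{Main obstacle.} The only delicate point is the order of factors if $A_t, B_t, C_t$ are operators rather than scalars. Lemma~\ref{lemma:involution}(ii) is stated exactly in the order $A_t C_t$ that Step 1 produces, so it applies directly. The term $\tilde{\bm{\epsilon}}_t$ is carried through symbolically and never re-expanded. This matters because it is evaluated at $\tilde{\bm{x}}_t$, and the statement keeps it in that form rather than asserting anything about the network at $\bm{x}_t$.
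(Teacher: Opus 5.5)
Your proof is correct and follows the paper's argument step for step: substitute $\tilde{\bm{x}}_t$ from Theorem~\ref{thm:exact_semantic} into $\Phi^t$, rewrite $-A_t C_t$ as $B_t$ via Lemma~\ref{lemma:involution}(ii), and factor out $B_t$. Your remark on factor ordering is also valid, since the paper obtains (ii) by left-multiplying (i) by $A_t$, which produces exactly the product $A_t C_t$ your Step~1 generates.
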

\begin{proof}
    Expanding $\Phi^t$ and substituting $\tilde{\bm{x}}_t$ from Theorem~\ref{thm:exact_semantic}:
    \begin{align}
        \bm{x}_{t-1}^{\mathrm{new}} &= A_t \Big( \bm{x}_t - C_t \gamma_1 \Delta \bm{\epsilon}_\theta^t(\bm{x}_t) \Big) + B_t \tilde{\bm{\epsilon}}_t \nonumber \\
        &= A_t \bm{x}_t - (A_t C_t) \gamma_1 \Delta \bm{\epsilon}_\theta^t(\bm{x}_t) + B_t \tilde{\bm{\epsilon}}_t
    \end{align}
    By Lemma~\ref{lemma:involution}(ii), $A_t C_t = -B_t$. Factoring out $B_t$ yields:
    \begin{align}
        \bm{x}_{t-1}^{\mathrm{new}} &= A_t \bm{x}_t + B_t \Big( \tilde{\bm{\epsilon}}_t + \gamma_1 \Delta \bm{\epsilon}_\theta^t(\bm{x}_t) \Big)
    \end{align}
\end{proof}

\begin{figure}[t] 
    \centering
    \includegraphics[width=\linewidth]{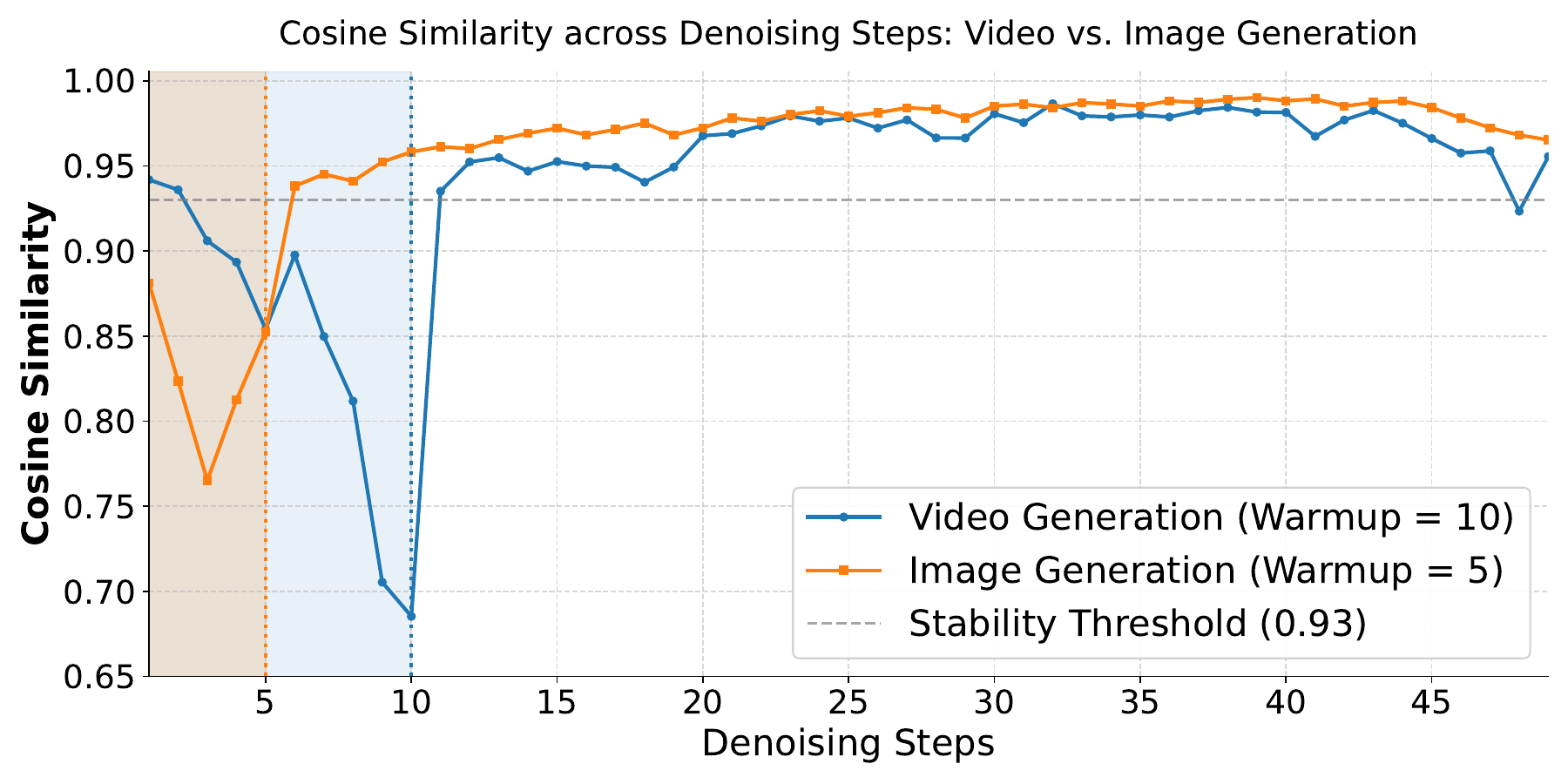}
    \caption{\textbf{Cosine similarity across denoising steps.} Empirical evaluation across a large number of samples reveals a consistent pattern: video generation requires a 10-step warmup phase to achieve semantic stability, whereas image generation converges rapidly in just 5 steps.}
    \label{fig:warmup}
\end{figure}

\subsection{Temporal Surrogate and LTE Bounds}
\label{sec:z2_sampling}

To avoid the overhead of querying $\bm{x}_t$ while preserving zero spatial error ($\tau(t) \equiv 0$, Theorem~\ref{thm:exact_semantic}), \textbf{$Z^2$-Sampling} caches a \textbf{Temporal Semantic Surrogate} to maintain the 2-NFE baseline (Algorithm~\ref{alg:comparison}):
\begin{equation}
    \Delta \bm{\epsilon} \triangleq \Delta \bm{\epsilon}_\theta^{t+1}(\tilde{\bm{x}}_{t+1}) \approx \Delta \bm{\epsilon}_\theta^t(\bm{x}_t)
\end{equation}
Exploiting rapid temporal stabilization (Figure~\ref{fig:warmup}), this surrogate uses a brief caching warmup to eliminate redundant queries, yielding a Local Truncation Error of $B_t \sim \mathcal{O}(h)$.

\begin{assumption}
    \label{assum:lipschitz}
    Let $\mathcal{M} \subset \mathcal{X}$ be the data manifold. The operator $\Delta \bm{\epsilon}_\theta^t(\bm{x})$ is uniformly Lipschitz continuous ($L_x, L_t > 0$) strictly within a local neighborhood $\mathcal{T}_\delta(\mathcal{M})$. Beyond this, $L_{\mathrm{extruded}} \gg L_x$.
\end{assumption}

\begin{theorem}
    \label{thm:lte_analysis}
    Unlike explicit Z-Sampling which suffers an unconstrained $\mathcal{O}(1)$ persistent spatial bias off-manifold, $Z^2$-Sampling bounds the surrogate error by $\mathcal{O}(h)$, yielding an $\mathcal{O}(h^2)$ LTE and preserving convergence order.
\end{theorem}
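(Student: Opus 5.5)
The plan is to compare the $Z^2$ update with the exact Implicit Z-Sampling update of Theorem~\ref{thm:ultimate_collapse} and bound their difference. Write the ideal collapsed step as $\bm{x}_{t-1}^{\star}=\Phi^t\big(\bm{x}_t;\tilde{\bm{\epsilon}}_t^{\star}+\gamma_1\Delta\bm{\epsilon}_\theta^t(\bm{x}_t)\big)$, with $\tilde{\bm{x}}_t^{\star}=\bm{x}_t-C_t\gamma_1\Delta\bm{\epsilon}_\theta^t(\bm{x}_t)$ from Theorem~\ref{thm:exact_semantic}. The $Z^2$ step uses $\tilde{\bm{x}}_t=\bm{x}_t-C_t\gamma_1\Delta\bm{\epsilon}$ with the cached surrogate $\Delta\bm{\epsilon}=\Delta\bm{\epsilon}_\theta^{t+1}(\tilde{\bm{x}}_{t+1})$. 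By linearity of $\Phi^t$ and Lemma~\ref{lemma:involution}(ii), the same factorization as in Theorem~\ref{thm:ultimate_collapse} gives
\begin{equation}
\bm{x}_{t-1}^{\mathrm{new}}-\bm{x}_{t-1}^{\star}=B_t\Big[(\tilde{\bm{\epsilon}}_t-\tilde{\bm{\epsilon}}_t^{\star})+\gamma_1\big(\Delta\bm{\epsilon}-\Delta\bm{\epsilon}_\theta^t(\bm{x}_t)\big)\Big].
\end{equation}
So the LTE reduces to $B_t$ times a surrogate mismatch. Since $B_t=\mathcal{O}(h)$ for all the solvers of Section~\ref{sec:preliminaries} (for example $\sigma_{t-1}-\sigma_t$ for Euler, $\sin\Delta\theta$ for the spherical solver, and a first-order expansion for DDIM), it suffices to show the bracket is $\mathcal{O}(h)$.

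The core estimate is $\|\Delta\bm{\epsilon}_\theta^{t+1}(\tilde{\bm{x}}_{t+1})-\Delta\bm{\epsilon}_\theta^t(\bm{x}_t)\|\le L_t h+L_x\|\tilde{\bm{x}}_{t+1}-\bm{x}_t\|$, which follows from Assumption~\ref{assum:lipschitz}. This requires both points to lie in $\mathcal{T}_\delta(\mathcal{M})$, so I will argue by induction over the zigzag phase. The spatial gap satisfies $\bm{x}_t=\Phi^{t+1}(\tilde{\bm{x}}_{t+1};\cdot)$, so $\|\bm{x}_t-\tilde{\bm{x}}_{t+1}\|\le|A_{t+1}-1|\,\|\tilde{\bm{x}}_{t+1}\|+|B_{t+1}|\,\|\tilde{\bm{\epsilon}}_{t+1}\|=\mathcal{O}(h)$, using $A_{t+1}=1+\mathcal{O}(h)$ and bounded network outputs. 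The displacement $C_t\gamma_1\Delta\bm{\epsilon}$ is also $\mathcal{O}(h)$ because $C_t=-A_t^{-1}B_t=\mathcal{O}(h)$. Consequently every queried point stays within $\mathcal{O}(h)$ of an on-manifold anchor, and for $h$ small it remains inside $\mathcal{T}_\delta(\mathcal{M})$. The warmup is what initializes the induction with an on-manifold cache, since the warmup steps are standard CFG steps. The term $\tilde{\bm{\epsilon}}_t-\tilde{\bm{\epsilon}}_t^{\star}$ is then bounded by Lipschitz continuity of $\bm{\epsilon}_\theta^t$ by $L\|C_t\|\gamma_1\cdot\mathcal{O}(h)=\mathcal{O}(h^2)$. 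Combining the two terms, the bracket is $\mathcal{O}(h)$ and the LTE is $\mathcal{O}(h^2)$. Standard one-step-method theory (Lady Windermere's fan with a Lipschitz increment) then shows the first-order global convergence of the base solver is preserved.

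For the contrast with explicit Z-Sampling, I will exhibit $\tau(t)=C_t\big(\bm{\epsilon}_\theta^t(\bm{x}_{t-1},0)-\bm{\epsilon}_\theta^t(\bm{x}_t,0)\big)$. Here $\bm{x}_{t-1}$ has been displaced by a large-$\gamma_1$ step that leaves $\mathcal{T}_\delta(\mathcal{M})$, so only $L_{\mathrm{extruded}}\gg L_x$ applies. The difference in the bracket is therefore not controlled by $h$ and is treated as an $\mathcal{O}(1)$ bias in the noise space, persistent across all $\lambda$ steps.

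The main obstacle is the tube-invariance step: making rigorous that $\tilde{\bm{x}}_{t+1}$ and $\bm{x}_t$ both remain in $\mathcal{T}_\delta(\mathcal{M})$, which is needed to apply $L_x$ instead of $L_{\mathrm{extruded}}$, uniformly over the zigzag phase. I would first try a discrete Gr\"onwall argument, showing that the distance to the manifold grows by at most $\mathcal{O}(h)$ per step over $\lambda$ steps with $\lambda h$ bounded. If that fails, I would take $\mathcal{T}_\delta$-membership of the guided trajectory as part of the hypothesis.
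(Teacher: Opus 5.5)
Your argument for the $Z^2$ half is the paper's argument. You use the same triangle inequality through $\Delta\bm{\epsilon}_\theta^{t+1}(\bm{x}_t)$, the same bound $L_t h + L_x\|\bm{x}_t-\tilde{\bm{x}}_{t+1}\|$, the same displacement estimate $\le Kh$, and the same multiplication by $B_t=\mathcal{O}(h)$. Your explicit half rests on the same $L_{\mathrm{extruded}}$ heuristic. Your additions are more careful than the paper in three places:
\begin{itemize}
\item The exact defect identity from Lemma~\ref{lemma:involution}(ii) exposes the $\gamma_1$ factor and the shifted-query term $\tilde{\bm{\epsilon}}_t-\tilde{\bm{\epsilon}}_t^{\star}$. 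The paper's ``$\mathrm{LTE}=B_t\cdot\mathcal{E}_{\mathrm{TSS}}$'' omits both.
\item You derive $\|\bm{x}_t-\tilde{\bm{x}}_{t+1}\|\le Kh$ from $\bm{x}_t=\Phi^{t+1}(\tilde{\bm{x}}_{t+1};\tilde{\bm{\epsilon}}_{t+1})$; the paper only asserts it.
\item You give an actual argument for ``preserving convergence order''.
\end{itemize}
Two small corrections. First, the shifted-query term needs Lipschitz continuity of the full guided predictor $\bm{\epsilon}_\theta^t(\cdot,\gamma_1)$, and hence of the unconditional branch. Assumption~\ref{assum:lipschitz} does not provide this, so state it as an extra hypothesis. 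Second, the cache gives $Z^2$-Sampling memory, so it is not a one-step method. Run the fan argument on Implicit Z-Sampling instead, and treat the surrogate mismatch as a per-step defect along the numerical trajectory.

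The step that would fail is deriving tube invariance from the $\mathcal{O}(h)$ displacement. Your own estimate gives $\|\bm{x}_{t-1}-\bm{x}_t\|\le|A_t-1|\,\|\bm{x}_t\|+|B_t|\,\|\bm{\epsilon}_1\|=\mathcal{O}(h)$ for the explicit intermediate state. So ``within $\mathcal{O}(h)$ of an anchor, hence inside $\mathcal{T}_\delta(\mathcal{M})$ for small $h$'' would put $\bm{x}_{t-1}$ inside the tube as well. The noise difference in $\tau(t)$ would then be $\mathcal{O}(h)$, giving $\tau(t)=\mathcal{O}(h^2)$. That is exactly the $-h^2\mathbf{J}_{\bm{v}_{uc}}\bm{v}$ term of Appendix~\ref{app:tau_annihilation}, and the contrast you want disappears.

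Conversely, suppose $\gamma_1$ is large enough that the guidance displacement $B_t\gamma_1\Delta\bm{\epsilon}$ ejects $\bm{x}_{t-1}$. Your query point $\tilde{\bm{x}}_t=\bm{x}_t+A_t^{-1}B_t\gamma_1\Delta\bm{\epsilon}$ carries essentially the same displacement, and so does the cached point $\tilde{\bm{x}}_{t+1}$. Then $L_x$ is unavailable exactly where your core estimate uses it. A discrete Gr\"onwall bound does not rescue this. An $\mathcal{O}(h)$ drift per step over $\lambda$ steps with $\lambda h=\mathcal{O}(1)$ accumulates to $\mathcal{O}(1)$, not to something below $\delta$.

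Assumption~\ref{assum:lipschitz} alone cannot separate the two samplers, so drop that route and use your fallback. Posit $\bm{x}_t,\tilde{\bm{x}}_t,\tilde{\bm{x}}_{t+1}\in\mathcal{T}_\delta(\mathcal{M})$ and $\bm{x}_{t-1}\notin\mathcal{T}_\delta(\mathcal{M})$ as hypotheses. The paper's proof does precisely this, tacitly: ``within $\mathcal{T}_\delta(\mathcal{M})$'' for $Z^2$-Sampling, and ``$\bm{x}_{t-1}\notin\mathcal{T}_\delta(\mathcal{M})$'' for explicit Z-Sampling. The same hypothesis is also what makes your per-step defect uniformly $\mathcal{O}(h^2)$ in the convergence argument.
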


\begin{proof}
    \textbf{Explicit Z-Sampling:} Let $\bm{\epsilon}_\theta^t(\bm{x}_{t-1}, \gamma_2) \approx \bm{\epsilon}_\theta^t(\bm{x}_t, \gamma_2)$\\ $\bm{x}_{t-1} \notin \mathcal{T}_\delta(\mathcal{M})$. Since $L_{\mathrm{extruded}} \to \infty$, $\|\tau(t)\|$ injects an $\mathcal{O}(1)$ bias.
    
    \textbf{$Z^2$-Sampling:} The surrogate discrepancy is $\mathcal{E}_{\mathrm{TSS}} = \|\Delta \bm{\epsilon}_\theta^t(\bm{x}_t) - \Delta \bm{\epsilon}\|$. Invoking the Triangle Inequality within $\mathcal{T}_\delta(\mathcal{M})$:
    \begin{align}
        \mathcal{E}_{\mathrm{TSS}} &\le \|\Delta \bm{\epsilon}_\theta^t(\bm{x}_t) - \Delta \bm{\epsilon}_\theta^{t+1}(\bm{x}_t)\| + \|\Delta \bm{\epsilon}_\theta^{t+1}(\bm{x}_t) - \Delta \bm{\epsilon}_\theta^{t+1}(\tilde{\bm{x}}_{t+1})\| \nonumber \\
        &\le L_t h + L_x \|\bm{x}_t - \tilde{\bm{x}}_{t+1}\|
    \end{align}
Bounding the ODE step displacement by $\|\bm{x}_t - \tilde{\bm{x}}_{t+1}\| \le Kh$ yields $\mathcal{E}_{\mathrm{TSS}} = \mathcal{O}(h)$. Modulated by $B_t \sim \mathcal{O}(h)$, the cumulative LTE is:
    \begin{equation}
        \mathrm{LTE} = B_t \cdot \mathcal{E}_{\mathrm{TSS}} = \mathcal{O}(h) \cdot \mathcal{O}(h) = \mathcal{O}(h^2)
    \end{equation}
\end{proof}

\subsection{Backward Error Analysis}
\label{sec:generalization_ode}

Applying Backward Error Analysis yields the \textit{Modified Equation}.

\begin{theorem}
    \label{thm:modified_equation}
    For a CFG-guided ODE $\frac{\mathrm{d}\bm{x}}{\mathrm{d}t} = \bm{v}_{uc}(\bm{x}) + \gamma_1 \Delta \bm{v}(\bm{x})$, integrating the $Z^2$-Sampling sequence solves a modified initial value problem governed by an \textbf{Effective Vector Field}:
    \begin{equation*}
        \bm{v}_{\mathrm{eff}}(\bm{x}) = \bm{v}_{uc}(\bm{x}) + 2\gamma_1 \Delta \bm{v}(\bm{x}) - h \gamma_1 \nabla_{\Delta \bm{v}} \bm{v}(\bm{x}, \gamma_1) + \mathcal{O}(h^2)
    \end{equation*}
    where $\nabla_{\Delta \bm{v}} \bm{v} \triangleq \mathbf{J}_{\bm{v}} \Delta \bm{v}$ represents the \textbf{Directional Derivative}.
\end{theorem}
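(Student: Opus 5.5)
We work in the continuous-time velocity parametrization. The solver step is the Euler map $\Phi^t(\bm{x};\bm{v}) = \bm{x} + h\,\bm{v}$, so $A_t=1$ and $B_t = h$ (with $h$ the signed step). Lemma~\ref{lemma:involution} then gives $C_t = -h$. Write $\bm{v}(\bm{x},\gamma_1) = \bm{v}_{uc}(\bm{x}) + \gamma_1\Delta\bm{v}(\bm{x})$. The plan has three steps: write one zigzag-phase update of Algorithm~\ref{alg:comparison} as a one-step map $\bm{x}_t \mapsto \bm{x}_{t-1}$; Taylor-expand that map in $h$; then match it against the time-$h$ flow of a modified field $\bm{v}_{\mathrm{eff}} = \bm{f}_0 + h\bm{f}_1 + \mathcal{O}(h^2)$, as in standard backward error analysis \cite{Hairer2006}.

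\textbf{Step 1 (one-step map).} By Theorem~\ref{thm:exact_semantic}, the lookahead point is $\tilde{\bm{x}}_t = \bm{x}_t - C_t\gamma_1\Delta\bm{v} = \bm{x}_t + h\gamma_1\Delta\bm{v}$. Here $\Delta\bm{v}$ is the cached surrogate. By Theorem~\ref{thm:lte_analysis} it equals $\Delta\bm{v}(\bm{x}_t) + \mathcal{O}(h)$. That $\mathcal{O}(h)$ discrepancy is multiplied by a further $h$, so it only enters at $\mathcal{O}(h^2)$ in the update and at $\mathcal{O}(h)$ in the field. I will treat it as a correction absorbed into the remainder. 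Theorem~\ref{thm:ultimate_collapse} then gives
\begin{equation*}
\bm{x}_{t-1} = \bm{x}_t + h\big[\bm{v}(\tilde{\bm{x}}_t,\gamma_1) + \gamma_1\Delta\bm{v}(\bm{x}_t)\big].
\end{equation*}

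\textbf{Step 2 (expansion).} Expand $\bm{v}(\tilde{\bm{x}}_t,\gamma_1) = \bm{v}(\bm{x}_t,\gamma_1) + h\gamma_1\mathbf{J}_{\bm{v}}\Delta\bm{v} + \mathcal{O}(h^2)$. This yields
\begin{equation*}
\bm{x}_{t-1} = \bm{x}_t + h\big(\bm{v}_{uc} + 2\gamma_1\Delta\bm{v}\big) + h^2\gamma_1\nabla_{\Delta\bm{v}}\bm{v} + \mathcal{O}(h^3).
\end{equation*}

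\textbf{Step 3 (matching).} The exact flow of $\bm{v}_{\mathrm{eff}}$ over time $h$ is $\bm{x} + h\bm{f}_0 + h^2\big(\bm{f}_1 + \tfrac12\mathbf{J}_{\bm{f}_0}\bm{f}_0\big) + \mathcal{O}(h^3)$. Matching orders gives $\bm{f}_0 = \bm{v}_{uc} + 2\gamma_1\Delta\bm{v}$ and $\bm{f}_1 = \gamma_1\nabla_{\Delta\bm{v}}\bm{v} - \tfrac12\mathbf{J}_{\bm{f}_0}\bm{f}_0$.

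\textbf{Main obstacle.} The difficulty is reconciling this with the stated form. The stated $\bm{v}_{\mathrm{eff}}$ has first-order correction $-h\gamma_1\nabla_{\Delta\bm{v}}\bm{v}$ and no $\tfrac12\mathbf{J}_{\bm{f}_0}\bm{f}_0$ term. Two conventions must be pinned down.

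The first is sign. The sampler integrates backward in diffusion time, so the step is $-h$ with $h>0$. Substituting $h \to -h$ in Step 2 leaves the $h^2$ coefficient $+\gamma_1\nabla_{\Delta\bm{v}}\bm{v}$ unchanged. When the modified equation is written with respect to forward parameter $\mathrm{d}t$, that coefficient produces the minus sign in front of $h\gamma_1\nabla_{\Delta\bm{v}}\bm{v}$. I will verify this carefully by redoing the matching with step $-h$.

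The second is the baseline term $-\tfrac12\mathbf{J}_{\bm{f}_0}\bm{f}_0$. This is the generic modified-equation term that plain Euler also has. I will state the theorem as characterizing the deviation of $Z^2$-Sampling relative to the Euler baseline, meaning that this baseline term is subsumed, along with the $\mathcal{O}(h)$ surrogate error, into the stated $\mathcal{O}(h^2)$ and standard-solver remainder. Making explicit exactly which terms are subsumed is where I expect most of the care to be needed. The fallback is to present the full $\bm{f}_1$ and isolate $-\gamma_1\nabla_{\Delta\bm{v}}\bm{v}$ as the term attributable to the zigzag.
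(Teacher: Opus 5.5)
Your Steps 1 and 2 match the paper's argument. The paper takes $A_t=1$, $B_t=-h$, $C_t=h$ with $h>0$ (your $h\to-h$), collapses the lookahead via Theorems~\ref{thm:exact_semantic}--\ref{thm:ultimate_collapse}, and Taylor-expands $\bm{v}(\tilde{\bm{x}}_t,\gamma_1)$ exactly as you do. Your sign bookkeeping is also right.

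The routes diverge at Step 3, and that is where your proposal does not close. The paper never matches against the exact flow of $\bm{v}_{\mathrm{eff}}$. It divides the expanded update by $h$ and \emph{defines} $\bm{v}_{\mathrm{eff}}$ as the difference quotient, i.e.\ the field for which one Euler step $\bm{x}_{t-1}^{\mathrm{new}} = \bm{x}_t - h\,\bm{v}_{\mathrm{eff}}(\bm{x}_t)$ reproduces the $Z^2$ update to $\mathcal{O}(h^3)$. With that definition your Step 2 already is the proof, and no $\mathbf{J}_{\bm{f}_0}\bm{f}_0$ term arises.

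Your exact-flow matching is the textbook modified equation of backward error analysis, and your computation is correct. In the backward convention it gives $\bm{f}_0 - h\gamma_1\nabla_{\Delta\bm{v}}\bm{v} + \tfrac{h}{2}\mathbf{J}_{\bm{f}_0}\bm{f}_0 + \mathcal{O}(h^2)$ with $\bm{f}_0 = \bm{v}_{uc} + 2\gamma_1\Delta\bm{v}$. So the stated field is a modified equation only in the one-step sense.

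The concrete gap is your plan to subsume $\tfrac12\mathbf{J}_{\bm{f}_0}\bm{f}_0$ into the $\mathcal{O}(h^2)$ remainder. That term enters the field at order $h$, the same order as the curvature term you want to isolate, so no $\mathcal{O}(h^2)$ remainder can absorb it. Your ``relative to the Euler baseline'' reading is nonetheless correct in substance. The true modified field of the $Z^2$ step differs from that of plain Euler applied to $\bm{f}_0$ by exactly $-h\gamma_1\nabla_{\Delta\bm{v}}\bm{v} + \mathcal{O}(h^2)$. Equivalently, the $Z^2$ step is one Euler step of the stated $\bm{v}_{\mathrm{eff}}$. Either adopt that as the definition, as the paper does, or report the full $\bm{f}_1$; do not call the extra term a remainder.

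The same order-counting error affects your treatment of the cached surrogate. Write $\Delta\bm{v}_{\mathrm{cache}} = \Delta\bm{v}(\bm{x}_t) + \bm{e}$ with $\bm{e} = \mathcal{O}(h)$. The duality argument of Theorem~\ref{thm:ultimate_collapse} (Lemma~\ref{lemma:involution}(ii)) then gives $\bm{x}_{t-1} = \bm{x}_t - h\big[\bm{v}(\tilde{\bm{x}}_t,\gamma_1) + \gamma_1\Delta\bm{v}_{\mathrm{cache}}\big]$. The cached value, not $\Delta\bm{v}(\bm{x}_t)$ as in your Step 1, occupies the second slot.

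The effect of $\bm{e}$ through $\tilde{\bm{x}}_t$ is indeed only $\mathcal{O}(h^3)$ in the update. But this direct term adds $\gamma_1\bm{e}$ to $\bm{v}_{\mathrm{eff}}$, which is generically of exact order $h$: a lag term governed by how fast $\Delta\bm{v}$ changes along the trajectory. That is again the order of the penalty. You note yourself that it is $\mathcal{O}(h)$ in the field and then propose to absorb it into an $\mathcal{O}(h^2)$ remainder, which is inconsistent.

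The paper's proof avoids this only by using the exact $\Delta\bm{v}(\bm{x}_t)$ throughout. In effect it derives the formula for the non-cached collapse of Theorems~\ref{thm:exact_semantic}--\ref{thm:ultimate_collapse}. To obtain the stated form you must make that idealization explicit. For the genuinely cached sequence, the effective field carries the additional $\mathcal{O}(h)$ term $\gamma_1(\Delta\bm{v}_{\mathrm{cache}} - \Delta\bm{v}(\bm{x}_t))$.
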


\begin{proof}
    Mapping to an explicit Euler discretization ($A_t \to 1, B_t \to -h, C_t \to h$), the intermediate state is $\tilde{\bm{x}}_t = \bm{x}_t - h \gamma_1 \Delta \bm{v}(\bm{x}_t)$. Taylor expanding $\tilde{\bm{v}}_t \triangleq \bm{v}(\tilde{\bm{x}}_t, \gamma_1)$ around $\bm{x}_t$:
    \begin{align}
        \tilde{\bm{v}}_t &= \bm{v}(\bm{x}_t, \gamma_1) - h \gamma_1 \mathbf{J}_{\bm{v}}(\bm{x}_t, \gamma_1) \Delta \bm{v}(\bm{x}_t) + \mathcal{O}(h^2) \nonumber \\
        &= \bm{v}(\bm{x}_t, \gamma_1) - h \gamma_1 \nabla_{\Delta \bm{v}} \bm{v}(\bm{x}_t, \gamma_1) + \mathcal{O}(h^2)
    \end{align}
    The secondary integration $\bm{x}_{t-1}^{\mathrm{new}} = \bm{x}_t - h \big( \tilde{\bm{v}}_t + \gamma_1 \Delta \bm{v}(\bm{x}_t) \big)$ becomes:
    \begin{align*}
    \bm{x}_{t-1}^{\mathrm{new}} &= \bm{x}_t - h \Big[ \bm{v}(\bm{x}_t, \gamma_1) + \gamma_1 \Delta \bm{v}(\bm{x}_t) - h \gamma_1 \nabla_{\Delta \bm{v}} \bm{v}(\bm{x}_t, \gamma_1) \Big] + \mathcal{O}(h^3)
    \end{align*}
    With $\bm{v}(\bm{x}_t, \gamma_1) = \bm{v}_{uc}(\bm{x}_t) + \gamma_1 \Delta \bm{v}(\bm{x}_t)$, the difference quotient is:
    \begin{align*}
    \frac{\bm{x}_t - \bm{x}_{t-1}^{\mathrm{new}}}{h} &= \bm{v}_{uc}(\bm{x}_t) + 2\gamma_1 \Delta \bm{v}(\bm{x}_t) - h \gamma_1 \nabla_{\Delta \bm{v}} \bm{v}(\bm{x}_t, \gamma_1) + \mathcal{O}(h^2)
    \end{align*}
Via Backward Error Analysis, this integration equates to a single Euler step ($\bm{x}_{t-1}^{\mathrm{new}} = \bm{x}_t - h \bm{v}_{\mathrm{eff}}(\bm{x}_t)$) with the \textbf{Effective Vector Field}:
    \begin{equation*}
        \bm{v}_{\mathrm{eff}}(\bm{x}) = \bm{v}_{uc}(\bm{x}) + 2\gamma_1 \Delta \bm{v}(\bm{x}) - h \gamma_1 \nabla_{\Delta \bm{v}} \bm{v}(\bm{x}, \gamma_1) + \mathcal{O}(h^2)
    \end{equation*}
\end{proof}

\begin{remark}
Theorem~\ref{thm:modified_equation} shows $Z^2$-Sampling implicitly doubles guidance ($2\gamma_1$). While standard CFG collapses at ultra-high scales, explicit Z-Sampling survives due to a curvature penalty ($-h \gamma_1 \nabla_{\Delta \bm{v}} \bm{v}$) acting as a Riemannian centripetal force against off-manifold escapes. By algebraically internalizing this exact regularization, $Z^2$-Sampling bypasses explicit Z-Sampling's severe overhead and spatial drift, achieving zero extra cost and $\tau(t) \equiv 0$ (shown in Appendix B).
\end{remark}

\section{Empirical Analysis}

\subsection{Experimental Settings}
\textbf{Datasets \& Metrics}\quad
We evaluate on three text-to-image (Pick-a-Pic \cite{kirstain2023pickapicopendatasetuser}, DrawBench \cite{saharia2022photorealistictexttoimagediffusionmodels}, GenEval \cite{ghosh2023genevalobjectfocusedframeworkevaluating}) and one text-to-video (ChronoMagic-Bench-150 \cite{yuan2024chronomagic}) benchmarks. For images, we use HPS v2 \cite{wu2023human}, PickScore \cite{kirstain2023pickapicopendatasetuser}, ImageReward (IR) \cite{xu2023imagerewardlearningevaluatinghuman}, and Aesthetic Score (AES) \cite{schuhmann2022laion5bopenlargescaledataset}. For videos, temporal consistency and prompt fidelity are measured via CHScore Flow \cite{yuan2024chronomagic}, CLIP SIM \cite{radford2021learningtransferablevisualmodels}, Frame LPIPS \cite{zhang2018perceptual}/SSIM \cite{nilsson2020understandingssim}, and MTScore CLIP.

\noindent\textbf{Models \& Settings}\quad 
We test on standard U-Nets (SD-2.1 \cite{rombach2022highresolutionimagesynthesislatent}, SDXL \cite{podell2023sdxlimprovinglatentdiffusion}, ModelScope-1.7B \cite{wang2023modelscope}) and DiTs (Hunyuan-DiT \cite{li2024hunyuandit}, CogVideoX-2B \cite{yang2024cogvideox}), where we perform 50 denoising steps ($T=50$). For the distilled DreamShaper-xl-v2-turbo \cite{sauer2023adversarialdiffusiondistillation}, we set the denoising step $T$ only to 4 via the Euler sampler. We set the guidance scale $\gamma_1 = 5.5$ for SD-2.1 and SDXL, $\gamma_1 = 6.0$ for Hunyuan-DiT,

\begin{table}[H]
  \centering
  \caption{Quantitative comparison: $\mathbf{Z^2}$\textbf{-Sampling} matches or exceeds Z-Sampling's performance while being $\sim$3$\times$ faster.}
  \label{tab:main_comparison}
  \resizebox{\textwidth}{!}{
  \begin{tabular}{llc@{\hspace{1.5em}}cccc@{\hspace{1.5em}}cccc}
    \toprule
    \multicolumn{2}{c}{\multirow{2}{*}{\textbf{Method}}} & \multirow{2}{*}{\textbf{Time (s) $\downarrow$}} & \multicolumn{4}{c}{\textbf{Pick-a-Pic}} & \multicolumn{4}{c}{\textbf{DrawBench}} \\
    \cmidrule(lr){4-7} \cmidrule(lr){8-11}
    & & & HPS v2 $\uparrow$ & AES $\uparrow$ & PickScore $\uparrow$ & IR $\uparrow$ & HPS v2 $\uparrow$ & AES $\uparrow$ & PickScore $\uparrow$ & IR $\uparrow$ \\
    \midrule
    
    \multirow{3}{*}{SD-2.1} 
    & Standard & 3.0 & 23.05 & 5.28 & 19.08 & -43.66 & 23.90 & 5.20 & 20.49 & -44.34 \\
    & Z-Sampling & 9.5 & 24.53 & \textbf{5.47} & 19.51 & -18.62 & \textbf{24.67} & 5.29 & \textbf{20.82} & -23.61 \\
    & $Z^2$-Sampling (ours) & \textbf{3.0} & \textbf{24.58} & 5.45 & \textbf{19.54} & \textbf{-18.45} & 24.60 & \textbf{5.31} & 20.78 & \textbf{-23.40} \\
    \midrule

    \multirow{3}{*}{SDXL} 
    & Standard & 5.8 & 29.89 & 6.09 & 21.63 & 58.65 & 28.81 & 5.56 & 22.31 & 60.75 \\
    & Z-Sampling & 17.8 & \textbf{31.28} & 6.13 & 21.85 & \textbf{79.22} & 30.50 & 5.68 & \textbf{22.46} & 79.97 \\
    & $Z^2$-Sampling (ours) & \textbf{5.8} & 31.11 & \textbf{6.16} & \textbf{21.88} & 76.63 & \textbf{30.55} & \textbf{5.71} & 22.39 & \textbf{80.12} \\
    \midrule

    \multirow{3}{*}{\begin{tabular}[c]{@{}l@{}}DreamShaper\\-xl-v2-turbo\end{tabular}} 
    & Standard & 1.4 & 30.04 & 5.93 & 21.59 & 66.18 & 26.85 & 5.28 & 21.77 & 40.22 \\
    & Z-Sampling & 4.5 & 32.38 & \textbf{6.15} & 22.11 & 90.87 & \textbf{29.90} & 5.64 & \textbf{22.35} & 73.51 \\
    & $Z^2$-Sampling (ours) & \textbf{1.4} & \textbf{32.42} & 6.10 & \textbf{22.14} & \textbf{91.05} & 29.81 & \textbf{5.67} & 22.30 & \textbf{73.80} \\
    \midrule

    \multirow{3}{*}{Hunyuan-DiT} 
    & Standard & 60.8 & 30.82 & 6.20 & 21.88 & 94.22 & 30.22 & 5.70 & 22.29 & 82.63 \\
    & Z-Sampling & 182.3 & \textbf{31.12} & 6.31 & \textbf{21.90} & 97.88 & 30.53 & 5.75 & \textbf{22.40} & 96.13 \\
    & $Z^2$-Sampling (ours) & \textbf{60.8} & 31.05 & \textbf{6.34} & 21.87 & \textbf{98.15} & \textbf{30.58} & \textbf{5.78} & 22.36 & \textbf{96.32} \\
    \bottomrule
  \end{tabular}
  }
\end{table}

\begin{table}[H]
  \centering
  \caption{Enhancing the training-free AYS (DreamShaper-xl-v2-turbo) and training-based Diffusion-DPO (SDXL).}
  \label{tab:combined_enhancement}
  \resizebox{\textwidth}{!}{
  \begin{tabular}{ll@{\hspace{1.5em}}cccc@{\hspace{2em}}cccc}
    \toprule
    \multirow{2}{*}{\textbf{Model}} & \multirow{2}{*}{\textbf{Method}} & \multicolumn{4}{c}{\textbf{Pick-a-Pic}} & \multicolumn{4}{c}{\textbf{DrawBench}} \\
    \cmidrule(lr){3-6} \cmidrule(lr){7-10}
    & & HPS v2$\uparrow$ & AES$\uparrow$ & PickScore$\uparrow$ & IR$\uparrow$ & HPS v2$\uparrow$ & AES$\uparrow$ & PickScore$\uparrow$ & IR$\uparrow$ \\
    \midrule
    \multirow{3}{*}{AYS} & Baseline & 32.78 & 6.05 & 22.32 & 91.88 & 30.95 & 5.57 & 22.68 & 77.85 \\
    & + Z-Sampling  & \textbf{33.57} & 6.15 & 22.45 & \textbf{104.22} & 31.93 & \textbf{5.72} & 22.75 & 94.82 \\
    & + $Z^2$-Sampling (ours) & 33.52 & \textbf{6.18} & \textbf{22.49} & 103.90 & \textbf{31.98} & 5.70 & \textbf{22.79} & \textbf{95.10} \\
    \midrule
    \multirow{3}{*}{Diffusion-DPO} & Baseline & 31.41 & 5.60 & 22.00 & 90.28 & 29.80 & 5.66 & 22.47 & 85.94 \\
    & + Z-Sampling & 31.60 & \textbf{6.08} & 22.18 & 94.48 & \textbf{30.35} & 5.67 & 22.47 & \textbf{93.34} \\
    & + $Z^2$-Sampling (ours) & \textbf{31.65} & 6.05 & \textbf{22.22} & \textbf{94.80} & 30.28 & \textbf{5.71} & \textbf{22.50} & 93.10 \\
    \bottomrule
  \end{tabular}
  }
\end{table}

\begin{table}[H]
  \caption{Quantitative results on GenEval (SDXL). All values are percentages (\%).}
  \label{tab:geneval_results}
  \resizebox{\linewidth}{!}{
  \begin{tabular}{lccccccc}
    \toprule
    \textbf{Method} & \textbf{Single Object} $\uparrow$ & \textbf{Two Objects} $\uparrow$ & \textbf{Counting} $\uparrow$ & \textbf{Colors} $\uparrow$ & \textbf{Position} $\uparrow$ & \textbf{Color Attribution} $\uparrow$ & \textbf{Average} $\uparrow$ \\
    \midrule
    Standard    & 97.50 & 69.70 & 33.75 & 86.71 & 10.00 & 18.00 & 52.52 \\
    Z-Sampling     & \textbf{100.0} & 74.75 & 46.25 & \textbf{87.23} & 10.00 & 24.00 & 57.04 \\
    $Z^2$-Sampling & 99.50 & \textbf{75.10} & \textbf{46.80} & 87.05 & \textbf{12.00} & \textbf{24.50} & \textbf{57.35} \\
    \bottomrule
  \end{tabular}
  }
\end{table}

ModelScope-1.7B, and CogVideoX-2B, and $\gamma_1 = 3.5$ for DreamShaper-xl-v2-turbo. The inversion guidance scale is set to $\gamma_2 = 0$. Zigzag operations are applied strictly after stabilization, spanning the remaining $\lambda = T - W - 1$ steps, where $W$ denotes the warmup duration. Specifically, we set $W=5$ for standard image generation models, $W=10$ for video generation models, and scale it down to a single step ($W=1$) for distilled regimes (DreamShaper).

\noindent\textbf{Baselines}\quad
We compare $Z^2$-Sampling against Standard sampling, AYS (training-free), Diffusion DPO (training-based), and Z-Sampling.

\begin{figure*}[t]
    \centering
    \includegraphics[width=\linewidth]{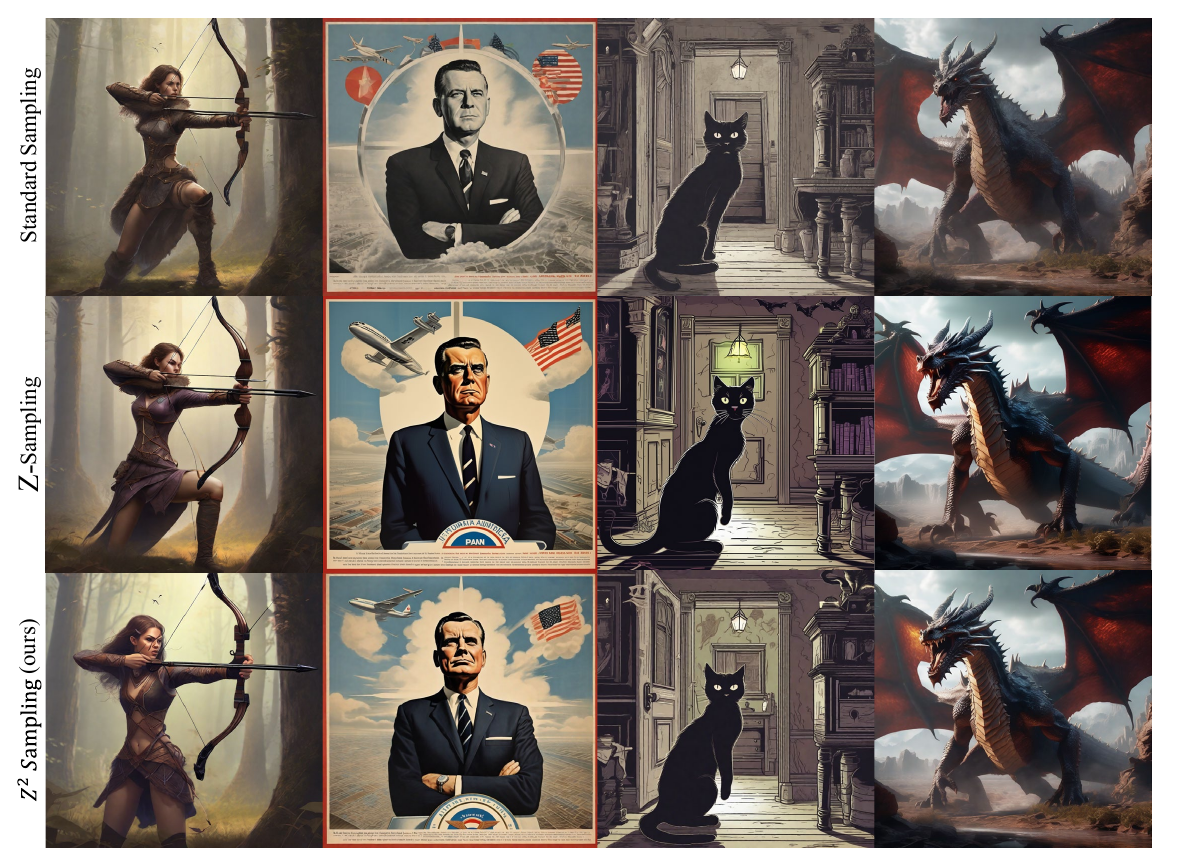}
    \caption{Qualitative comparison of text-to-image generation. From top to bottom: Standard Sampling, Z-Sampling, and our proposed $Z^2$-Sampling. As illustrated, $Z^2$-Sampling accumulates richer semantic information compared to Standard Sampling, while effectively suppressing the color oversaturation artifacts present in Z-Sampling.}
    \label{fig:placeholder}
\end{figure*}

\subsection{Text-to-Image Generation}

\noindent\paragraph{Quantitative Analysis.}
As shown in Table~\ref{tab:main_comparison}, standard CFG struggles with semantic alignment. While explicit Z-Sampling significantly improves prompt adherence, it triples inference latency (e.g., 5.8s to 17.8s for SDXL). In contrast, $Z^2$-Sampling restores the standard 2-NFE baseline speed while matching or exceeding the semantic gains of Z-Sampling (e.g., achieving a PickScore of 21.88 on Pick-a-Pic vs. Z-Sampling's 21.85). This empirically confirms that our implicit algebraic collapse captures manifold curvature without the redundant, expensive physical traversals.

\noindent\paragraph{Fine-Grained Compositional Generation.}
On the GenEval benchmark (Table~\ref{tab:geneval_results}), $Z^2$-Sampling achieves a superior overall accuracy of 57.35\%. It particularly excels in complex relational tasks, outperforming explicit Z-Sampling in ``Two object'' (75.10\% vs. 74.75\%) and ``Counting'' (46.80\% vs. 46.25\%). This validates that our Temporal Semantic Surrogate successfully preserves non-linear semantic exploration while strictly avoiding the spatial off-manifold errors that typically cause object blending and attribute leakage.

\noindent\paragraph{Orthogonal Compatibility.}
Table~\ref{tab:combined_enhancement} demonstrates the plug-and-play versatility of $Z^2$-Sampling. Integrating it into the training-free AYS framework yields compounded preference gains, boosting ImageReward to 103.90. Similarly, applying it to training-based Diffusion-DPO improves PickScore to 22.22 without adding inference latency. This proves that our implicit curvature penalty ($\nabla_{\Delta \bm{v}} \bm{v}$) is orthogonal to both step-distillation schedules and weight-level fine-tuning.

\noindent\paragraph{Qualitative Analysis.}
Qualitatively (Figure~\ref{fig:placeholder}), standard sampling omits elements in dense prompts. While explicit Z-Sampling improves semantic richness, its off-manifold evaluations induce a persistent numerical drift $\tau(t)$, manifesting as structural artifacts and color oversaturation. By enforcing $\tau(t) \equiv 0$ through exact noise reuse, $Z^2$-Sampling fundamentally suppresses this degradation. It yields text-aligned, photorealistic images while bypassing the visual artifacts normally associated with high-scale guidance.

\subsection{Extension to Text-to-Video Generation}

\textbf{The Spatiotemporal Bottleneck.} Transitioning to video synthesis introduces the strict requirement of temporal consistency, rendering standard Z-Sampling infeasible. Its physical off-manifold traversals not only triple the inference time but also introduce severe numerical drift. When propagated across the temporal axis,

these spatial errors compound exponentially, aggressively destroying inter-frame continuity and leading to structural collapse.

\textbf{Zero-Cost Generalization via Algebraic Collapse.} In contrast, our proposed $Z^2$-Sampling provides a seamless, zero-cost extension. By employing an algebraic formulation that physically bypasses off-manifold errors, temporal coherence is strictly preserved. Our cached Temporal Semantic Surrogate naturally adapts to spatiotemporal attention mechanisms without any architectural modifications. Evaluated on fundamentally different backbones (DiT-based CogVideoX-2B and U-Net-based ModelScope-1.7B), our method consistently outperforms standard baselines without introducing computational overhead (Table~\ref{tab:evaluation_results}).

\begin{figure}[t]
    \centering
    \includegraphics[width=\linewidth]{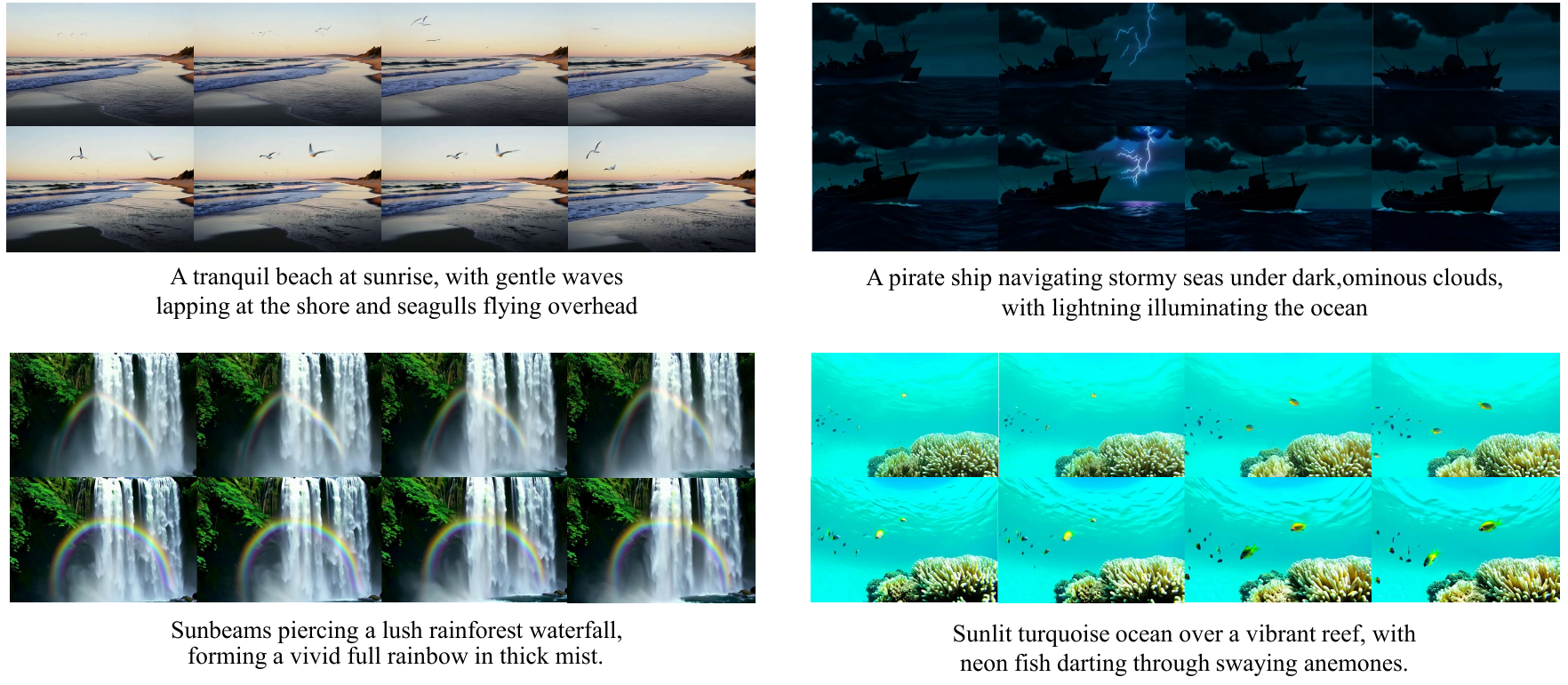}
   \caption{Qualitative comparison of video generation. In each scenario, the top row displays the Standard sampling baseline, and the bottom row shows our proposed $Z^2$-Sampling. As illustrated, $Z^2$-Sampling achieves superior semantic alignment with the text prompts and significantly stronger temporal consistency.}
    \label{fig:video}
\end{figure}

\begin{table}[t]
  \centering
  \caption{Quantitative evaluation on ChronoMagic-Bench-150 (CogVideoX-2B and ModelScope-1.7B).}
  \label{tab:evaluation_results}
  \resizebox{\textwidth}{!}{
  \begin{tabular}{llccccc}
    \toprule
    \textbf{Model} & \textbf{Method} & \textbf{CHScore Flow}$\uparrow$ & \textbf{CLIP SIM}$\uparrow$ & \textbf{Frame LPIPS}$\downarrow$ & \textbf{Frame SSIM}$\uparrow$ & \textbf{MTScore CLIP}$\uparrow$ \\
    \midrule
    \multirow{2}{*}{CogVideoX-2B} & Standard       & 59.93 & 29.00 & 9.68 & 87.07 & 5.38 \\
                                  & $Z^2$-Sampling & \textbf{60.24} & \textbf{29.25} & \textbf{9.24} & \textbf{87.61} & \textbf{5.47} \\
    \midrule
    \multirow{2}{*}{ModelScope-1.7B} & Standard       & 70.47 & 27.79 & 8.12 & 79.13 & 10.56 \\
                                     & $Z^2$-Sampling & \textbf{72.88} & \textbf{28.03} & \textbf{7.04} & \textbf{81.03} & \textbf{10.89} \\
    \bottomrule
  \end{tabular}
  }
\end{table}

\textbf{Quantitative and Qualitative Superiority.} Empirically, $Z^2$-Sampling stabilizes video generation. On ModelScope, it significantly boosts temporal consistency (CHScore Flow: 70.47 to 72.88; Frame SSIM: 79.13 to 81.03). On CogVideoX, enhanced perceptual continuity is confirmed by a reduction in Frame LPIPS (9.68 to 9.24). 
Qualitatively (Figure~\ref{fig:video}), while baselines struggle with high-frequency flickering (e.g., bamboo leaves) and disjointed rapid motions (e.g., branching lightning), $Z^2$-Sampling locks in structural priors across the video tube. This enforces semantic alignment without accumulating numerical error, synthesizing videos with superior temporal smoothness and uncompromising prompt fidelity.

\subsection{Ablation Studies}

\paragraph{Effectiveness of the Temporal Warmup Phase.}
To validate the necessity of stabilizing early generation trajectories, we ablate the temporal warmup mechanism in $Z^2$-Sampling (Table~\ref{tab:ablation_style2_readable}). While applying $Z^2$-Sampling naively from the initial noise state $t=T$ already yields substantial gains over the standard baseline, it suffers from slight instability. This occurs because the initial denoising steps are highly chaotic, making the dynamically cached Temporal Semantic Surrogate less reliable. Introducing a brief warmup phase allows the continuous diffusion manifold to physically settle before caching the surrogate. As shown in Table~\ref{tab:ablation_style2_readable}, this simple temporal stabilization provides a strict performance boost across all benchmarks. Most notably, it significantly enhances metrics highly sensitive to early structural formation, such as ImageReward on DrawBench (77.38 $\to$ 80.12) and the strict Counting accuracy on GenEval (45.50\% $\to$ 46.80\%), confirming its vital role in preserving fine-grained geometric alignment.

\begin{figure}[t]
  \centering
  \begin{minipage}{0.48\textwidth}
    \centering
    \includegraphics[width=0.95\linewidth]{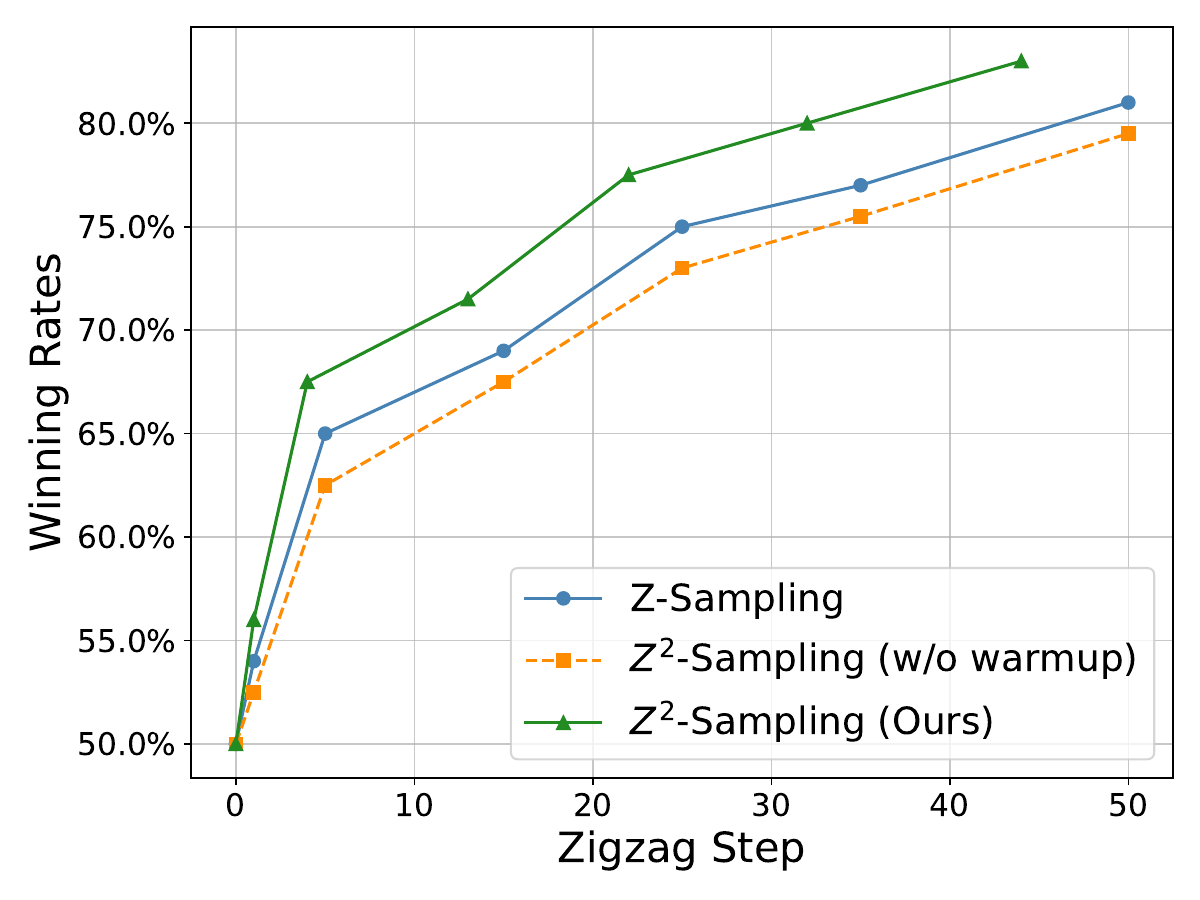}
    \\ \small (a) Winning Rate vs. Zigzag Steps ($\lambda$)
  \end{minipage}\hfill
  \begin{minipage}{0.48\textwidth}
    \centering
    \includegraphics[width=0.95\linewidth]{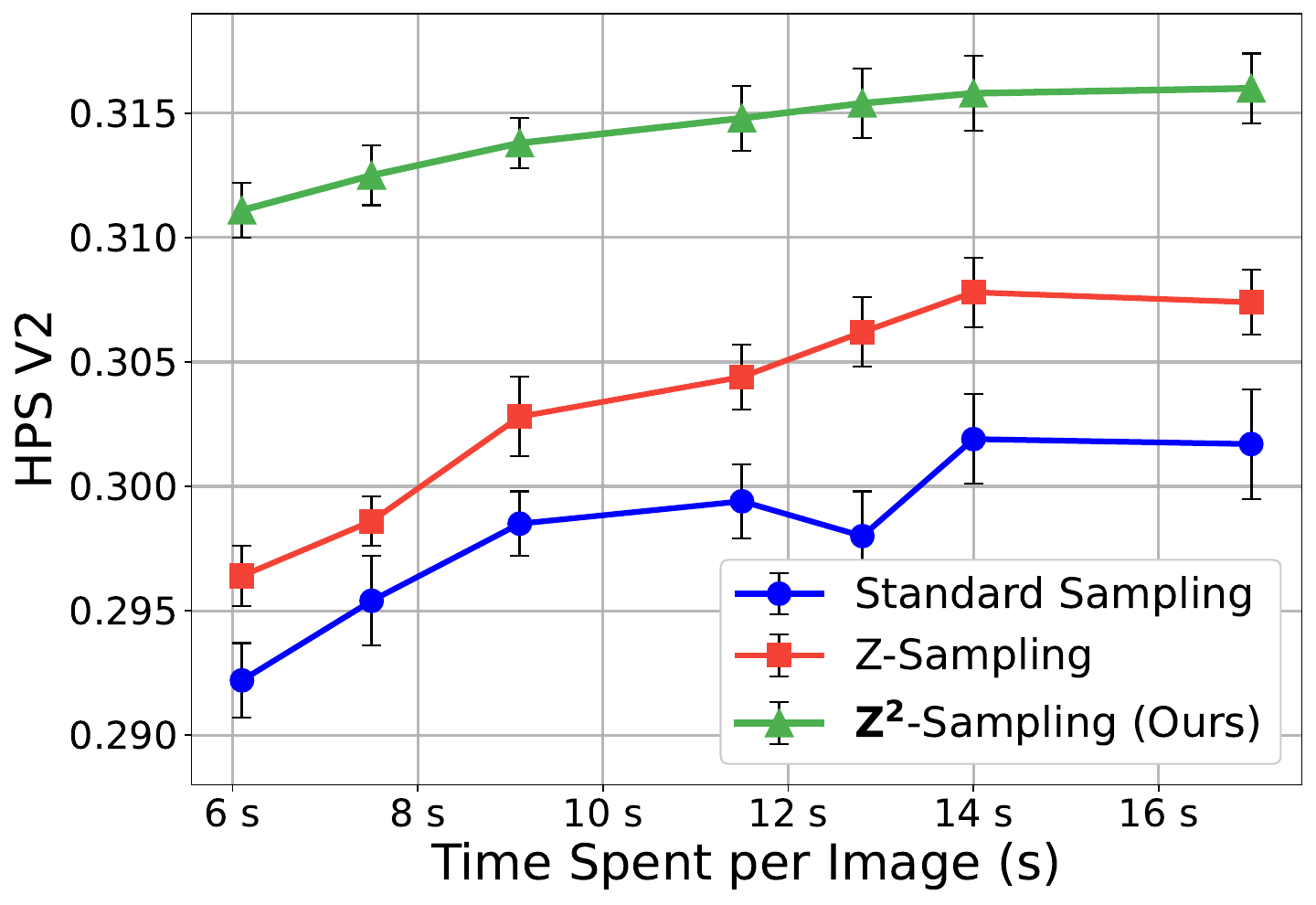}
    \\ \small (b) Semantic Alignment vs. Inference Time
  \end{minipage}
  \caption{\textbf{Shattering the Performance-Efficiency Pareto Frontier.} Both evaluations use HPS v2 on Pick-a-Pic. 
\textbf{(a)} Robustness to the active zigzag diffusion steps ($\lambda$). The vertical axis represents the head-to-head winning rate against the \textbf{Standard Sampling baseline}. Starting from a 50\% tie at $\lambda=0$ (where the process degenerates into standard sampling), the generation quality improves monotonically as $\lambda$ increases, indicating highly effective semantic information gain throughout the path.
  \textbf{(b)} Text-image alignment (HPS v2 score) versus inference time. While explicit Z-Sampling (red) requires severe computational latency to achieve marginal semantic gains over Standard Sampling (blue), $Z^2$-Sampling (green) breaks this barrier. By algebraically eliminating the off-manifold approximation error, it strictly dominates the entire frontier, achieving a higher preference score at its minimum time budget ($\sim$6s) than explicit Z-Sampling does at its peak computational allowance ($\sim$17s).}
  \label{fig:pareto_combined}
\end{figure}

\paragraph{Robustness to Zigzag Steps.}
Figure~\ref{fig:pareto_combined}(a) evaluates generation quality (win rate against standard sampling) versus active zigzag steps ($\lambda$). As $\lambda$ increases from 0 (the baseline), quality improves monotonically across all variants, validating that non-linear probing effectively accumulates semantic information. Notably, $Z^2$-Sampling consistently outperforms explicit Z-Sampling by strictly avoiding the accumulation of off-manifold approximation errors. Furthermore, the performance drop in the non-warmup variant corroborates our theoretical insight: early denoising steps are highly chaotic. A brief warmup phase prevents spatial estimation errors caused by forcing an early temporal surrogate, securing semantic gains.

\begin{table}[H]
  \centering
  \small
  \caption{Ablation study of the warmup phase on SDXL. ``Pick'' denotes PickScore, and ``Avg.'' is the overall score.}
  \label{tab:ablation_style2_readable}
  \begin{tabular*}{\linewidth}{@{\extracolsep{\fill}}lcccccc@{}}
    \toprule
    \multirow{2}{*}{\textbf{Method}} & \multicolumn{2}{c}{\textbf{Pick-a-Pic}} & \multicolumn{2}{c}{\textbf{DrawBench}} & \multicolumn{2}{c}{\textbf{GenEval}} \\
    \cmidrule(lr){2-3} \cmidrule(lr){4-5} \cmidrule(lr){6-7}
    & \textbf{HPS} $\uparrow$ & \textbf{PickScore} $\uparrow$ & \textbf{HPS} $\uparrow$ & \textbf{ImageReward} $\uparrow$ & \textbf{Counting} $\uparrow$ & \textbf{Average} $\uparrow$ \\
    \midrule
    Standard & 29.89 & 21.63 & 28.81 & 60.75 & 33.75 & 52.52 \\
    $Z^2$ (w/o warmup) & \textbf{31.11} & 21.78 & 30.33 & 77.38 & 45.50 & 56.86 \\
    $Z^2$ (w/ warmup) & \textbf{31.11} & \textbf{21.88} & \textbf{30.55} & \textbf{80.12} & \textbf{46.80} & \textbf{57.35} \\
    \bottomrule
  \end{tabular*}
\end{table}

\paragraph{Shattering the Performance-Efficiency Pareto Frontier.} 
Figure~\ref{fig:pareto_combined}(b) illustrates the core empirical achievement of our method by evaluating text-image alignment (HPS v2) against inference time per image. In standard diffusion generation, explicit Z-Sampling (red curve) breaks the standard alignment ceiling but incurs a severe computational penalty, rendering its Pareto performance highly inefficient. Our $Z^2$-Sampling (green curve) structurally shatters this conventional trade-off. By algebraically eliminating the off-manifold approximation error and reducing the computational cost back to the standard 2-NFE baseline via the Temporal Semantic Surrogate, $Z^2$-Sampling strictly dominates the entire frontier. Remarkably, $Z^2$-Sampling achieves a higher human preference score at its minimum time budget ($\sim$6s) than explicit Z-Sampling does at its peak computational allowance ($\sim$17s). This establishes $Z^2$-Sampling as a zero-cost, high-yield semantic alignment paradigm.

\section{Conclusion}
\label{sec:conclusion}

In this paper, we introduced $Z^2$-Sampling, a zero-cost paradigm that resolves the computational and geometric bottlenecks of explicit zigzag trajectories in diffusion models. By leveraging Exact Noise Reuse and ODE solver dualities, our method mathematically bypasses intermediate physical traversals, eliminating off-manifold approximation errors. Furthermore, we integrate a Temporal Semantic Surrogate to restore the standard 2-NFE baseline while synthesizing an implicit curvature penalty to prevent structural degradation under high guidance. Extensive evaluations demonstrate that $Z^2$-Sampling shatters the performance-efficiency Pareto frontier, establishing itself as a  compatible, drop-in replacement for standard CFG across diverse architectures and modalities.

\appendix

\section{Additional Experimental Results}

\begin{table}[htbp]
\centering
\caption{Quantitative results of SEG, CFG++, and our $Z^2$-Sampling (Model: SDXL). "Pick" denotes PickScore.}
\label{tab:seg_cfg}
\begin{tabular}{l ccc c ccc c}
\toprule
\multirow{2}{*}{\textbf{Method}} & \multicolumn{4}{c}{\textbf{Pick-a-Pic}} & \multicolumn{4}{c}{\textbf{DrawBench}} \\
\cmidrule(lr){2-5} \cmidrule(lr){6-9}
& HPS v2$\uparrow$ & AES$\uparrow$ & Pick$\uparrow$ & IR$\uparrow$ & HPS v2$\uparrow$ & AES$\uparrow$ & Pick$\uparrow$ & IR$\uparrow$ \\
\midrule
SEG & 30.53 & 6.12 & 21.42 & 61.57 & 29.60 & 5.66 & 22.15 & 60.42 \\
CFG++ & 30.28 & 6.09 & 21.83 & 67.30 & 28.65 & 5.62 & 22.38 & 62.66 \\
$Z^2$-Sampling (ours) & \textbf{31.11} & \textbf{6.16} & \textbf{21.88} & \textbf{76.63} & \textbf{30.55} & \textbf{5.71} & \textbf{22.39} & \textbf{80.12} \\
\bottomrule
\end{tabular}
\end{table}

\paragraph{Comparison with Training-Free Sampling Baselines.}
To further validate the superiority of our approach, we compare $Z^2$-Sampling against other recent state-of-the-art training-free sampling enhancements, specifically SEG \cite{hong2024smoothedenergyguidanceguiding} and CFG++ \cite{chung2024cfgmanifoldconstrainedclassifierfree}. As reported in Table~\ref{tab:seg_cfg}, $Z^2$-Sampling consistently dominates across all evaluation metrics on both Pick-a-Pic and DrawBench datasets. Notably, it achieves a significant margin in ImageReward (80.12 vs. 62.66 for CFG++ on DrawBench). This comprehensively demonstrates that our proposed algebraic collapse and exact noise reuse mechanism is fundamentally more effective at achieving semantic alignment and human preference than other existing inference-time modifications.

\begin{table}[H]
\centering
\caption{Spatial overhead analysis on SDXL. Crucially, the VRAM overhead introduced by caching the momentum tensor in $Z^2$-Sampling is extremely negligible ($\sim$0.3 MB). By mathematically annihilating the inversion steps, our method gracefully absorbs the architectural memory costs, achieving strict zero-overhead in spatial complexity relative to the Standard DDIM baseline.}
\label{tab:vram_overhead_analysis}
\resizebox{0.9\linewidth}{!}{
    \begin{tabular}{lcc}
    \toprule
    \textbf{Method} & \textbf{Peak VRAM (GB)} $\downarrow$ & \textbf{VRAM Overhead} $\downarrow$ \\
    \midrule
    Standard DDIM & \textbf{7.0137} & \textbf{Base} \\
    Original Z-Sampling & 7.0145 & + 0.82 MB \\
    \textbf{$Z^2$-Sampling (Ours)} & 7.0140 & \textbf{+ 0.31 MB} \\
    \bottomrule
    \end{tabular}
}
\end{table}

\paragraph{Zero Memory Overhead Analysis.}
To rigorously validate the spatial efficiency of our approach, we profile the peak VRAM consumption during SDXL inference. As reported in Table~\ref{tab:vram_overhead_analysis}, the Standard 

\begin{figure}[t]
    \centering
    \includegraphics[width=\linewidth]{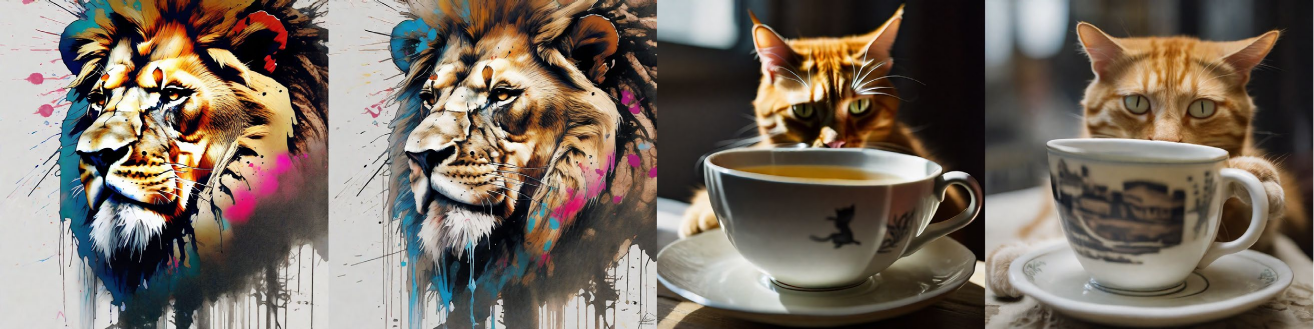}
    \caption{\textbf{Visual validation of the curvature penalty.} At extreme guidance levels, standard CFG suffers from severe oversaturation. Explicit Z-Sampling mitigates this but introduces spatial drift artifacts. $Z^2$-Sampling strictly internalizes the curvature penalty ($-h \gamma_1 \nabla_{\Delta \bm{v}} \bm{v}$) without off-manifold error, successfully suppressing the oversaturation caused by implicit double guidance ($2\gamma_1$) to produce high-fidelity results.}
    \label{fig:high_cfg_visualization}
\end{figure}

DDIM establishes a foundational memory footprint of 7.0137 GB. Explicit Z-Sampling increases this footprint to 7.0145 GB (a +0.82 MB overhead), primarily due to the instantiation of a dual-scheduler system and the accumulation of intermediate latent states during its multi-pass forward-backward operations. 

In stark contrast, our proposed $Z^2$-Sampling restricts the peak VRAM to 7.0140 GB. The empirical overhead introduced by dynamically caching the Temporal Semantic Surrogate (the momentum tensor) is merely 0.31 MB. This microscopic difference confirms our theoretical intuition: by mathematically annihilating the intermediate physical traversals and strictly bypassing the inversion scheduler, $Z^2$-Sampling seamlessly absorbs the memory cost of the surrogate cache. Consequently, our method achieves strict zero-overhead in spatial complexity, confirming it as a highly practical, drop-in replacement that places no additional burden on consumer hardware.

\begin{figure}[H]
    \includegraphics[width=\linewidth]{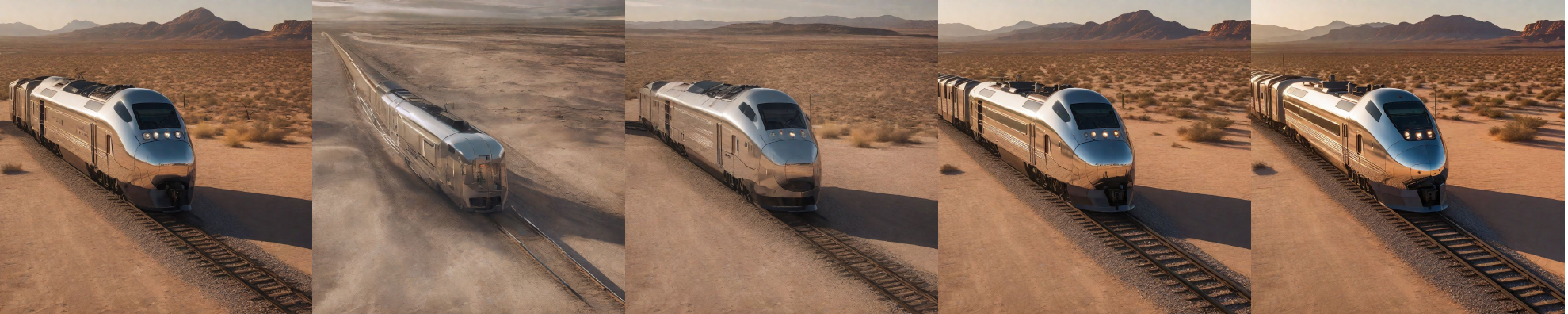}
    \caption{\textbf{Qualitative comparison across varying guidance scales ($\gamma_1$).} The leftmost column displays the Standard Sampling baseline. The subsequent columns, from left to right, illustrate the results of our proposed $Z^2$-Sampling with $\gamma_1 \in \{1, 2, 3, 4,\}$. As $\gamma_1$ increases, $Z^2$-Sampling progressively deepens semantic alignment and visual richness. Crucially, by algebraically eliminating off-manifold spatial drift, it successfully suppresses the color oversaturation and structural degradation typically associated with elevated guidance scales.}
    \label{fig:gamma1_vis}
\end{figure}

\section{Visualizing the Curvature Penalty at Ultra-High Guidance}
\label{sec:appendix_high_cfg}

Per Theorem \ref{thm:modified_equation}, the Effective Vector Field of $Z^2$-Sampling couples implicitly doubled guidance ($2\gamma_1 \Delta \bm{v}$) with a curvature penalty ($-h \gamma_1 \nabla_{\Delta \bm{v}} \bm{v}$). While a naive $2\gamma_1$ scale causes catastrophic oversaturation, this penalty acts as a Riemannian centripetal force neutralizing it. Figure \ref{fig:high_cfg_visualization} validates this.At extreme guidance (e.g., $\gamma_1 \ge 11$), standard CFG suffers structural collapse and oversaturation from blind linear extrapolation. Explicit Z-Sampling partially resists this via its equivalent penalty, but its off-manifold traversal introduces spatial drift $\tau(t)$ (Appendix \ref{app:tau_annihilation}), causing color degradation and localized artifacts.By algebraically enforcing Exact Noise Reuse, $Z^2$-Sampling eradicates spatial drift $\tau(t)$ while preserving the curvature penalty. Consequently, it suppresses oversaturation triggered by implicit double guidance, yielding structurally coherent, natural, and highly text-aligned images at extreme scales.

\section{Related Works}

\subsection{Semantic Information in Latent Space}
Recent works have demonstrated that the prior information embedded within the initial noise latent significantly impacts the quality and alignment of image generation \cite{samuel2023generatingimagesrareconcepts, xu2025goodseedmakesgood}. For instance, observations indicate that specific regions within random latents can trigger the generation of localized concepts \cite{mao2024lotterytickethypothesisdenoising}, while other approaches propose Golden Noise \cite{zhou2025goldennoisediffusionmodels} to leverage a semantic accumulation approach. Furthermore, studies have highlighted that even minor perturbations in the initial latent can lead to drastic shifts in the synthesized outputs of diffusion models \cite{poyuan2023syntheticshiftsinitialseed}. Consequently, explicitly injecting semantic information—such as low-frequency wavelengths—into Gaussian noise has proven effective in enhancing image quality and prompt alignment \cite{wu2023human}. Building upon these insights, our work investigates semantic information from a guidance perspective, implicitly integrating it into the sampling trajectory without the need for explicit reference data.

\subsection{Sampling Strategies of Diffusion Models}
To refine the diffusion sampling process, various strategies have been proposed. Early methods introduced Resampling \cite{lugmayr2022repaintinpaintingusingdenoising}, which involves injecting random noise and performing iterative back-and-forth steps at each timestep—a paradigm later adopted for video generation. Similarly, methods like IRFDS \cite{yang2025texttoimagerectifiedflowplugandplay} employ a pretrained rectified flow model to provide a prior, optimizing the initial latent for image editing tasks. However, they overlook the structural importance of the inverted latent by simply applying random noise, thereby limiting prompt adherence. In the realm of structural consistency, approaches such as Tune-a-Video \cite{wu2023tuneavideooneshottuningimage} incorporate a denoising-inversion paradigm; yet, their end-to-end approach remains suboptimal due to the neglect of the guidance gap.
Other approaches address spatial inconsistencies by proposing adaptive guidance mechanisms based on semantic segmentation \cite{shen2024rethinkingspatialinconsistencyclassifierfree}, but their reliance on attention-level interventions restricts adaptability and robustness. Constraint-based methods attempt to refine generation from a manifold perspective—such as substituting conditional noise with unconditional noise \cite{chung2024cfgmanifoldconstrainedclassifierfree} or applying spherical Gaussian constraints—though these often yield marginal improvements or require external reference data.

Notably, \textbf{Z-Sampling} \cite{bai2024zigzagdiffusionsamplingdiffusion} has emerged as a powerful technique to capture data manifold curvature by explicitly traversing multi-step forward-backward trajectories in the latent space. While it significantly improves semantic alignment and transfers effectively to various generative paradigms, its explicit traversals introduce two critical bottlenecks: they triple the Neural Function Evaluation (NFE) cost per step and force network evaluations at intermediate, off-manifold states. This positional mismatch injects an unconstrained spatial approximation error ($\tau(t)$) that degrades generation quality. To resolve this fundamental flaw, we propose \textbf{$Z^2$-Sampling}, a zero-cost paradigm that algebraically collapses the intermediate physical traversals. By coupling exact noise reuse with a dynamically cached Temporal Semantic Surrogate, $Z^2$-Sampling strictly annihilates the off-manifold spatial drift ($\tau(t) \equiv 0$) and restores the standard 2-NFE computational baseline. Consequently, it structurally shatters the performance-efficiency Pareto frontier, offering a robust, highly consistent semantic sampling trajectory without any computational overhead.

\section{Extended Theoretical Analysis and Proofs}
\label{sec:appendix_theory}

In this section, we provide the rigorous mathematical derivations omitted in the main text due to space constraints. Specifically, we formally verify the algebraic dualities across various deterministic ODE solvers, provide the detailed step-by-step derivation of the Effective Vector Field via Backward Error Analysis (BEA), and mathematically formalize the spatial drift $\tau(t)$ to prove its strict annihilation in $Z^2$-Sampling.

\subsection{Proof of Lemma 1}
\label{app:solver_dualities}

Lemma \ref{lemma:involution} establishes the universal topological constraints $A_t^{-1} B_t \equiv -C_t$ and $A_t C_t \equiv -B_t$ for deterministic affine diffusion solvers. We explicitly verify this for the three most widely used solvers: Variance Preserving (DDIM), Flow Matching (Euler), and Spherical formulations.

\paragraph{1. Standard DDIM ($\bm{\epsilon}$-prediction)}
For the DDIM formulation \cite{song2022denoisingdiffusionimplicitmodels}, the forward integration step $\Phi^t(\bm{x}_t; \bm{\epsilon})$ from timestep $t$ to $t-1$ is defined by:
\begin{equation}
    A_t = \sqrt{\frac{\alpha_{t-1}}{\alpha_t}}, \quad B_t = \sqrt{1-\alpha_{t-1}} - \sqrt{\frac{\alpha_{t-1}}{\alpha_t}} \sqrt{1-\alpha_t}
\end{equation}
The exact deterministic inversion step $\Psi^t(\bm{x}_{t-1}; \bm{\epsilon})$ from $t-1$ back to $t$ swaps the time indices:
\begin{equation}
    A_t^{-1} = \sqrt{\frac{\alpha_t}{\alpha_{t-1}}}, \quad C_t = \sqrt{1-\alpha_t} - \sqrt{\frac{\alpha_t}{\alpha_{t-1}}} \sqrt{1-\alpha_{t-1}}
\end{equation}
To strictly verify the duality $A_t^{-1} B_t \equiv -C_t$, we expand the product:
\begin{align}
    A_t^{-1} B_t &= \sqrt{\frac{\alpha_t}{\alpha_{t-1}}} \left( \sqrt{1-\alpha_{t-1}} - \sqrt{\frac{\alpha_{t-1}}{\alpha_t}} \sqrt{1-\alpha_t} \right) \nonumber \\
    &= \sqrt{\frac{\alpha_t}{\alpha_{t-1}}} \sqrt{1-\alpha_{t-1}} - \sqrt{1-\alpha_t} \nonumber \\
    &= -\left( \sqrt{1-\alpha_t} - \sqrt{\frac{\alpha_t}{\alpha_{t-1}}} \sqrt{1-\alpha_{t-1}} \right) \nonumber \\
    &\equiv -C_t
\end{align}
Multiplying both sides by $A_t$ inherently yields $B_t = -A_t C_t$, confirming Lemma 1.

\paragraph{2. Flow Matching / Euler Solver ($\bm{v}$-prediction)}
For continuous-time Flow Matching \cite{lipman2023flowmatchinggenerativemodeling} and Rectified Flows, the trajectory is integrated via the explicit Euler method. The forward step is simply $\bm{x}_{t-1} = \bm{x}_t + (\sigma_{t-1} - \sigma_t) \bm{v}_\theta$:
\begin{equation}
    A_t = 1, \quad B_t = \sigma_{t-1} - \sigma_t
\end{equation}
The exact inversion step reverses the integration direction:
\begin{equation}
    A_t^{-1} = 1, \quad C_t = \sigma_t - \sigma_{t-1}
\end{equation}
It trivially follows that $A_t^{-1} B_t = 1 \cdot (\sigma_{t-1} - \sigma_t) = -(\sigma_t - \sigma_{t-1}) \equiv -C_t$.

\paragraph{3. Spherical Interpolation ($\bm{v}$-prediction)}
For variance-preserving schedules framed in spherical coordinates, where the angle is $\theta_t = \arccos(\sqrt{\alpha_t})$, the transitions are given by trigonometric functions:
\begin{equation}
    A_t = \cos(\Delta\theta), \quad B_t = \sin(\Delta\theta)
\end{equation}
where $\Delta\theta = \theta_{t-1} - \theta_t$. The exact inversion coefficients are defined by reversing the angular step:
\begin{equation}
    A_t^{-1} = \frac{1}{\cos(\Delta\theta)}, \quad C_t = - \frac{\sin(\Delta\theta)}{\cos(\Delta\theta)}
\end{equation}
Evaluating the product $A_t C_t$:
\begin{equation}
    A_t C_t = \cos(\Delta\theta) \left( - \frac{\sin(\Delta\theta)}{\cos(\Delta\theta)} \right) = -\sin(\Delta\theta) \equiv -B_t
\end{equation}
This confirms that our algebraic collapse uniformly generalizes to spherical solver geometries.

\subsection{Proof of Theorem 4}
\label{app:bea_derivation}

We present the full mathematical derivation for Theorem \ref{thm:modified_equation}. Applying an explicit Euler discretization maps the coefficients to $A_t = 1, B_t = -h, C_t = h$, where $h$ is the integration step size.

The single algebraically collapsed intermediate state in $Z^2$-Sampling is:
\begin{equation}
    \tilde{\bm{x}}_t = \bm{x}_t - h \gamma_1 \Delta \bm{v}(\bm{x}_t)
\end{equation}
In the subsequent step, the network is evaluated at this shifted state: $\tilde{\bm{v}}_t \triangleq \bm{v}(\tilde{\bm{x}}_t, \gamma_1)$. We perform a multivariable Taylor expansion of $\bm{v}(\cdot, \gamma_1)$ centered around the pristine anchor $\bm{x}_t$:
\begin{align}
    \tilde{\bm{v}}_t &= \bm{v}\big(\bm{x}_t - h \gamma_1 \Delta \bm{v}(\bm{x}_t), \gamma_1\big) \nonumber \\
    &= \bm{v}(\bm{x}_t, \gamma_1) + \mathbf{J}_{\bm{v}}(\bm{x}_t, \gamma_1) \big( -h \gamma_1 \Delta \bm{v}(\bm{x}_t) \big) + \mathcal{O}(h^2) \nonumber \\
    &= \bm{v}(\bm{x}_t, \gamma_1) - h \gamma_1 \mathbf{J}_{\bm{v}}(\bm{x}_t, \gamma_1) \Delta \bm{v}(\bm{x}_t) + \mathcal{O}(h^2)
\end{align}
Using the formal definition of the directional derivative, \\$\mathbf{J}_{\bm{v}}(\bm{x}_t, \gamma_1) \Delta \bm{v}(\bm{x}_t) \triangleq \nabla_{\Delta \bm{v}} \bm{v}(\bm{x}_t, \gamma_1)$, we simplify this to:
\begin{equation}
    \tilde{\bm{v}}_t = \bm{v}(\bm{x}_t, \gamma_1) - h \gamma_1 \nabla_{\Delta \bm{v}} \bm{v}(\bm{x}_t, \gamma_1) + \mathcal{O}(h^2)
\end{equation}
The final integration step translates $\bm{x}_t$ to the updated state $\bm{x}_{t-1}^{\mathrm{new}}$ using the aggregated velocity:
\begin{align*}
    \bm{x}_{t-1}^{\mathrm{new}} &= \bm{x}_t - h \Big( \tilde{\bm{v}}_t + \gamma_1 \Delta \bm{v}(\bm{x}_t) \Big) \nonumber \\
    &= \bm{x}_t - h \Big[ \bm{v}(\bm{x}_t, \gamma_1) - h \gamma_1 \nabla_{\Delta \bm{v}} \bm{v}(\bm{x}_t, \gamma_1) + \gamma_1 \Delta \bm{v}(\bm{x}_t) \Big] + \mathcal{O}(h^3)
\end{align*}
Because standard CFG velocity is defined as $\bm{v}(\bm{x}_t, \gamma_1) = \bm{v}_{uc}(\bm{x}_t) + \gamma_1 \Delta \bm{v}(\bm{x}_t)$, we substitute this into the bracket:
\begin{equation}
    \bm{x}_{t-1}^{\mathrm{new}} = \bm{x}_t - h \Big[ \bm{v}_{uc}(\bm{x}_t) + 2\gamma_1 \Delta \bm{v}(\bm{x}_t) - h \gamma_1 \nabla_{\Delta \bm{v}} \bm{v}(\bm{x}_t, \gamma_1) \Big] + \mathcal{O}(h^3)
\end{equation}
Dividing by $h$ to express this discrete transition as a continuous-time differential equation yields the Effective Vector Field $\bm{v}_{\mathrm{eff}}$:
\begin{equation*}
    \bm{v}_{\mathrm{eff}}(\bm{x}) = \frac{\bm{x}_t - \bm{x}_{t-1}^{\mathrm{new}}}{h} = \bm{v}_{uc}(\bm{x}) + 2\gamma_1 \Delta \bm{v}(\bm{x}) - h \gamma_1 \nabla_{\Delta \bm{v}} \bm{v}(\bm{x}, \gamma_1) + \mathcal{O}(h^2)
\end{equation*}
By inherently synthesizing the penalty $\mathcal{R}(\bm{x}) = -h \gamma_1 \nabla_{\Delta \bm{v}} \bm{v}(\bm{x}, \gamma_1)$, $Z^2$-Sampling projects the generative trajectory against the direction of steepest density descent, acting as a \textit{Riemannian centripetal force} that dynamically bends the trajectory back onto the data manifold.

\subsection{Annihilation of Spatial Drift}
\label{app:tau_annihilation}

As emphasized in Remark 2, explicit Z-Sampling suffers from severe spatial drift. Here, we formally define this error and demonstrate how Exact Noise Reuse algebraically annihilates it.

\paragraph{The Emergence of $\tau(t)$ in Explicit Z-Sampling}
In the explicitly traversed zigzag path, the network moves to the forward state $\bm{x}_{t-1}^{\mathrm{exp}} = \bm{x}_t - h \bm{v}(\bm{x}_t, \gamma_1)$. Because the CFG scale $\gamma_1$ is typically large, this explicitly constructed point falls strictly outside the high-density local manifold tube $\mathcal{T}_\delta(\mathcal{M})$. 
The physical inversion evaluates the unconditional field at this uncorrected off-manifold point:
\begin{equation}
    \tilde{\bm{x}}_t^{\mathrm{exp}} = \bm{x}_{t-1}^{\mathrm{exp}} + h \bm{v}_{uc}(\bm{x}_{t-1}^{\mathrm{exp}})
\end{equation}
If we Taylor expand $\bm{v}_{uc}(\bm{x}_{t-1}^{\mathrm{exp}})$ around the on-manifold anchor $\bm{x}_t$:
\begin{equation}
    \bm{v}_{uc}(\bm{x}_{t-1}^{\mathrm{exp}}) = \bm{v}_{uc}(\bm{x}_t) - h \mathbf{J}_{\bm{v}_{uc}}(\bm{x}_t) \bm{v}(\bm{x}_t, \gamma_1) + \mathcal{O}(h^2)
\end{equation}
Substituting this back, we find:
\begin{align}
    \tilde{\bm{x}}_t^{\mathrm{exp}} &= \bm{x}_t - h \bm{v}(\bm{x}_t, \gamma_1) + h \Big[ \bm{v}_{uc}(\bm{x}_t) - h \mathbf{J}_{\bm{v}_{uc}}(\bm{x}_t) \bm{v}(\bm{x}_t, \gamma_1) \Big] \nonumber \\
    &= \bm{x}_t - h \gamma_1 \Delta \bm{v}(\bm{x}_t) \underbrace{- h^2 \mathbf{J}_{\bm{v}_{uc}}(\bm{x}_t) \bm{v}(\bm{x}_t, \gamma_1)}_{\text{Spatial Approximation Error } \tau(t)} + \mathcal{O}(h^3)
\end{align}
The explicit traversal injects a continuous-time spatial drift $\tau(t) \approx - h^2 \mathbf{J}_{\bm{v}_{uc}}(\bm{x}_t) \bm{v}(\bm{x}_t, \gamma_1)$. Critically, because $\bm{x}_{t-1}^{\mathrm{exp}}$ is off-manifold, the Lipschitz constant bounds fail (as per Assumption 1), and the Jacobian norm $\| \mathbf{J}_{\bm{v}_{uc}} \|$ becomes unconstrained. This introduces an irreversible numerical drift, manifesting visually as structural degradation and color oversaturation.

\paragraph{The Strict Annihilation ($\tau(t) \equiv 0$) in $Z^2$-Sampling}
By enforcing Exact Noise Reuse, $Z^2$-Sampling rigidly anchors the inversion step exactly at the on-manifold coordinate $\bm{x}_t$, reusing the pristine anchor $\bm{v}_{uc}(\bm{x}_t)$. Consequently, the Taylor expansion entirely collapses:
\begin{equation}
    \tilde{\bm{x}}_t = \bm{x}_t - h \gamma_1 \Delta \bm{v}(\bm{x}_t)
\end{equation}
Comparing this to the explicit case mathematically guarantees that $\tau(t)$ is physically bypassed and strictly evaluates to zero ($\tau(t) \equiv 0$). This ensures the complete elimination of off-manifold approximation errors while retaining the continuous-time curvature penalty.

\begin{figure}[H]
    \centering
    \includegraphics[width=\textwidth]{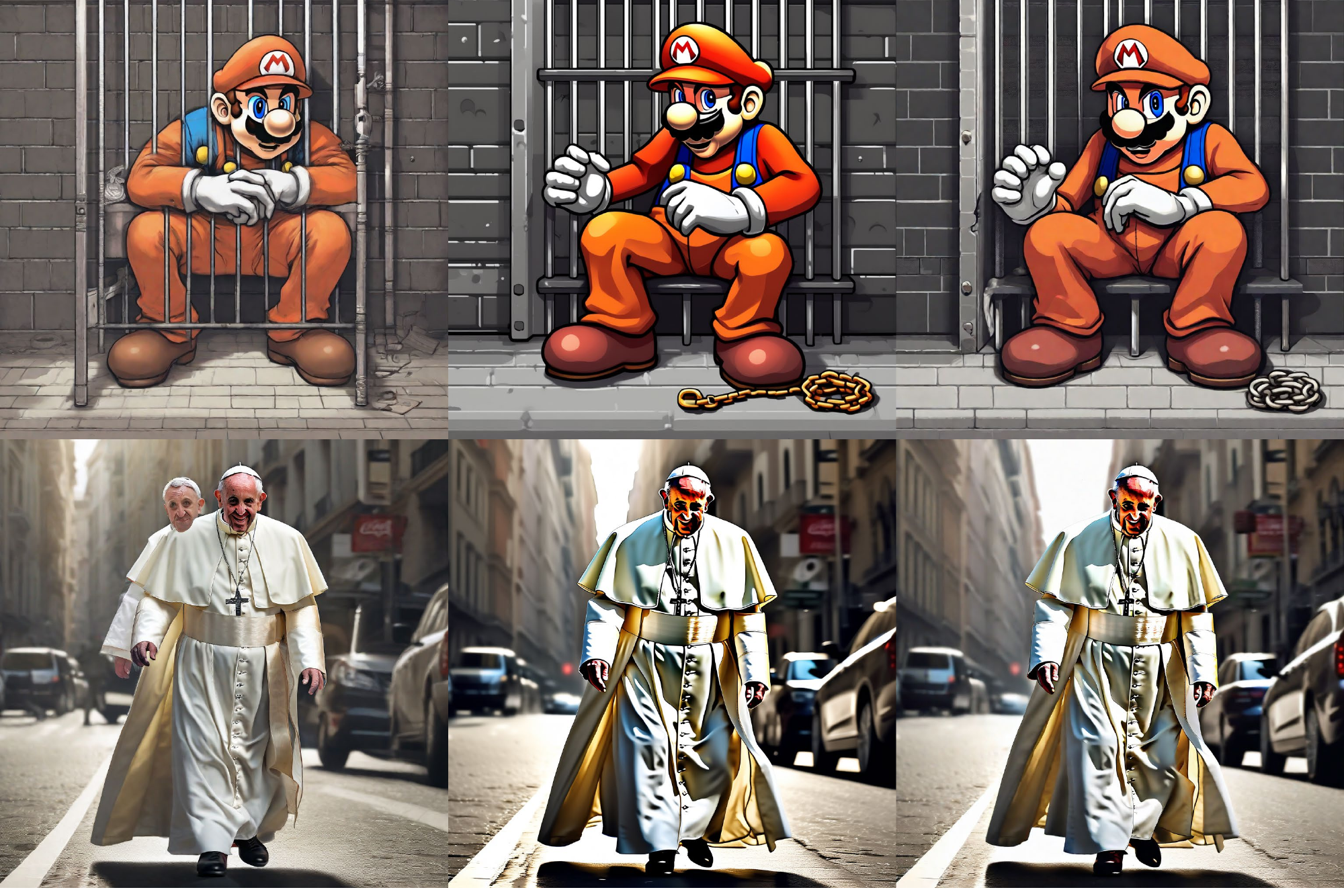}
    \caption{Failure cases of the proposed method.}
    \label{fig:failure_cases}
\end{figure}

\section{Failure Case Analysis}
\label{sec:appendix_failure}

While $Z^2$-Sampling structurally eliminates the off-manifold spatial drift $\tau(t)$ and significantly mitigates the color oversaturation artifacts inherent in explicit Z-Sampling, it is not entirely immune to the stochastic challenges of diffusion manifolds. As illustrated in Figure~\ref{fig:failure_cases}, we identify two primary failure modes that warrant further investigation.

\paragraph{Surrogate-Induced Structural Inconsistency.} 
The first row of Figure~\ref{fig:failure_cases} demonstrates that although $Z^2$-Sampling produces a more natural color palette and sharper textures than explicit Z-Sampling, it may occasionally inherit or amplify structural errors. For instance, in the Mario case, $Z^2$-Sampling results in an anatomical inconsistency (six fingers). This suggests that while the Temporal Semantic Surrogate error is theoretically bounded at $\mathcal{O}(h^2)$, a rapid shift in the data manifold geometry during the zigzag phase can cause the cached $\Delta \bm{\epsilon}$ to misguide fine-grained topological features.

\paragraph{Saturation Overflow in High-Density Priors.}
The second row illustrates a "saturation overflow" effect. In scenarios where the Standard Sampling baseline already achieves high semantic density and complex lighting (e.g., the Pope case), any further semantic enhancement—whether via Z or $Z^2$-Sampling—inevitably pushes the latent state into the extreme extrapolation regime of the CFG field. In these instances, the enhanced semantic guidance becomes 
redundant, leading to oversaturation and loss of dynamic range, regardless of our method's ability to eliminate spatial drift. This indicates that $Z^2$-Sampling is most effective as a corrective mechanism for \textit{under-aligned} prompts rather than a universal intensity booster.

\section{Detailed Experimental Configurations}
\label{app:exp_details}

To facilitate reproducibility, we summarize the specific hardware environment and the hyper-parameters used for text-to-video generation in Table~\ref{tab:video_params_appendix}. All experiments were executed using a single localized workstation.

\section{Implementation Details}
\label{app:implementation}

In this section, we provide the specific hardware configurations and hyper-parameters utilized for our video generation experiments. To ensure high-quality synthesis within a constrained VRAM budget, we employ the settings detailed in Table~\ref{tab:video_params_appendix}.

\begin{table}[H]
\centering
\small
\caption{Hardware environment and video generation configurations for $Z^2$-Sampling.}
\label{tab:video_params_appendix}
\begin{tabular}{@{}ll@{}}
\toprule
\textbf{Configuration Item} & \textbf{Value / Detail} \\ \midrule
\rowcolor[HTML]{F9F9F9} 
\multicolumn{2}{l}{\textit{Hardware \& Reproducibility}} \\
GPU Device & NVIDIA GeForce RTX 4090 (24GB VRAM) \\
Global Random Seed & 42 \\
Precision & FP16 (Mixed Precision) \\ \midrule
\rowcolor[HTML]{F9F9F9} 
\multicolumn{2}{l}{\textit{Video Generation}} \\
Resolution ($H \times W$) & $480 \times 720$ \\
Total Frames & 16 \\
Output FPS & 8 \\
Denoising Steps ($T$) & 50 \\
Guidance Scale ($\gamma_1$) & 6.0 \\
Warmup Duration ($W$) & 10 steps \\ \bottomrule
\end{tabular}
\end{table}

\section{Overview of Evaluated Models, Datasets, and Metrics}\label{sec:appendix_overview}In this section, we provide a comprehensive overview of the diffusion models, datasets, and evaluation metrics utilized throughout our empirical analysis to ensure self-containment and clarity.\subsection{Diffusion Models and Architectures}Our evaluations span a diverse set of fundamental architectures, including both standard U-Nets and recent Diffusion Transformers.
\begin{itemize}\item \textbf{Stable Diffusion 2.1 (SD-2.1)} \cite{rombach2022highresolutionimagesynthesislatent}: A foundational open-source latent diffusion model based on the U-Net architecture, trained on filtered subsets of the LAION dataset for high-resolution text-to-image synthesis.\item \textbf{Stable Diffusion XL (SDXL)} \cite{podell2023sdxlimprovinglatentdiffusion}: A significantly scaled-up successor to SD-2.1, featuring a massive U-Net backbone, a dual text-encoder system, and enhanced micro-conditioning mechanisms, allowing for superior compositional capability and photorealism.\item \textbf{DreamShaper-xl-v2-turbo} \cite{sauer2023adversarialdiffusiondistillation}: A highly optimized, distilled variant of SDXL (utilizing Adversarial Diffusion Distillation) designed to generate high-fidelity images in exceptionally few denoising steps (e.g., 4 steps), serving as our benchmark for ultra-fast sampling regimes.\item \textbf{Hunyuan-DiT} \cite{li2024hunyuandit}: A state-of-the-art text-to-image model that replaces the traditional convolutional U-Net with a scalable Diffusion Transformer (DiT) architecture, demonstrating strong semantic adherence and Chinese-English bilingual comprehension.\item \textbf{CogVideoX-2B} \cite{yang2024cogvideox}: A 2-billion parameter text-to-video generation model based on the DiT architecture. It excels in maintaining temporal coherence and understanding complex semantic motion prompts over continuous frames.\item \textbf{ModelScope-1.7B} \cite{wang2023modelscope}: A robust baseline for text-to-video synthesis utilizing a multi-stage spatiotemporal U-Net architecture to progressively generate and refine video sequences.\item \textbf{Align Your Steps (AYS)} \cite{sabour2024alignstepsoptimizingsampling}: A training-free sampling optimization framework that analytically determines the optimal timestep schedule for inference, which we combined with $Z^2$-Sampling to demonstrate orthogonal compatibility.\item \textbf{Diffusion-DPO} \cite{wallace2023diffusionmodelalignmentusing}: A training-based alignment paradigm (Direct Preference Optimization applied to diffusion) that fine-tunes model weights based on human preference datasets.\end{itemize}\subsection{Evaluation Datasets and Benchmarks}We measure semantic alignment and generation quality across several standardized, highly challenging benchmarks.\begin{itemize}\item \textbf{Pick-a-Pic} \cite{kirstain2023pickapicopendatasetuser}: A large-scale, open-source dataset consisting of complex text prompts and crowdsourced human preference labels, heavily used to evaluate text-to-image alignment and overall visual appeal.\item \textbf{DrawBench} \cite{saharia2022photorealistictexttoimagediffusionmodels}: Introduced alongside Imagen, this benchmark comprises a curated set of challenging prompts designed specifically to probe the weaknesses of text-to-image models, including dense compositions, text rendering, and unusual spatial relations.\item \textbf{GenEval} \cite{ghosh2023genevalobjectfocusedframeworkevaluating}: An object-focused evaluation framework that rigorously tests fine-grained compositional capabilities. It isolates specific skills such as single/multiple object rendering, precise counting, spatial positioning, and color attribution.\item \textbf{ChronoMagic-Bench-150} \cite{yuan2024chronomagic}: A specialized text-to-video benchmark focused on evaluating both prompt fidelity and the physical/temporal consistency of generated video sequences across diverse motion categories.\end{itemize}\subsection{Evaluation Metrics}To provide a holistic assessment, we employ a suite of automated metrics targeting human preference, semantic alignment, and temporal consistency.\paragraph{Text-to-Image Metrics:}\begin{itemize}\item \textbf{Human Preference Score v2 (HPS v2)} \cite{wu2023human}: A scoring model trained on a vast dataset of human choices to predict human preference regarding text-image alignment and aesthetic quality.\item \textbf{PickScore} \cite{kirstain2023pickapicopendatasetuser}: A specialized metric trained directly on the Pick-a-Pic dataset, functioning as a highly accurate proxy for human visual preference in text-to-image generation.\item \textbf{ImageReward (IR)} \cite{xu2023imagerewardlearningevaluatinghuman}: A comprehensive reward model trained to align with human aesthetic values and strict text-prompt adherence, capturing nuances often missed by standard CLIP scores.\item \textbf{Aesthetic Score (AES)} \cite{schuhmann2022laion5bopenlargescaledataset}: A neural evaluator trained to predict the inherent visual appeal and artistic quality of an image, independent of strict text alignment.\end{itemize}\paragraph{Text-to-Video Metrics:}\begin{itemize}\item \textbf{CLIP SIM (CLIP Similarity)} \cite{radford2021learningtransferablevisualmodels}: Measures the cosine similarity between the text prompt embedding and the visual embeddings of generated frames, quantifying semantic text-to-video alignment.\item \textbf{CHScore Flow} \cite{yuan2024chronomagic}: Evaluates the temporal consistency and logical motion progression of video sequences by analyzing optical flow dynamics across consecutive frames.\item \textbf{Frame LPIPS} \cite{zhang2018perceptual}: Evaluates the temporal smoothness by computing the Learned Perceptual Image Patch Similarity between consecutive frames. Lower values indicate fewer abrupt perceptual shifts.\item \textbf{Frame SSIM} \cite{nilsson2020understandingssim}: Computes the Structural Similarity Index Measure across adjacent frames. Higher values denote better preservation of structural elements over time.\end{itemize}

\section{Limitations}
\label{sec:limitations}

Despite the significant performance-efficiency gains achieved by $Z^2$-Sampling, this work has certain limitations that open avenues for future research:

\paragraph{Empirical Dependency of the Warmup Phase.}
$Z^2$-Sampling currently requires a brief warmup phase ($W$) to stabilize the Temporal Semantic Surrogate. While empirical results suggest $W=5$ for images and $W=10$ for videos are generally optimal, these values are heuristically determined. The ideal duration may vary with specific noise schedules, manifold complexity, or total sampling steps. Thus, an adaptive mechanism to dynamically transition from standard CFG to $Z^2$-Sampling based on trajectory curvature or signal-to-noise ratios remains unexplored.

\bibliography{software}

@misc{song2021scorebasedgenerativemodelingstochastic,
      title={Score-Based Generative Modeling through Stochastic Differential Equations}, 
      author={Yang Song and Jascha Sohl-Dickstein and Diederik P. Kingma and Abhishek Kumar and Stefano Ermon and Ben Poole},
      year={2021},
      eprint={2011.13456},
      archivePrefix={arXiv},
      primaryClass={cs.LG},
      url={https://arxiv.org/abs/2011.13456}, 
}

@misc{ho2020denoisingdiffusionprobabilisticmodels,
      title={Denoising Diffusion Probabilistic Models}, 
      author={Jonathan Ho and Ajay Jain and Pieter Abbeel},
      year={2020},
      eprint={2006.11239},
      archivePrefix={arXiv},
      primaryClass={cs.LG},
      url={https://arxiv.org/abs/2006.11239}, 
}

@misc{song2022denoisingdiffusionimplicitmodels,
      title={Denoising Diffusion Implicit Models}, 
      author={Jiaming Song and Chenlin Meng and Stefano Ermon},
      year={2022},
      eprint={2010.02502},
      archivePrefix={arXiv},
      primaryClass={cs.LG},
      url={https://arxiv.org/abs/2010.02502}, 
}

@misc{rombach2022highresolutionimagesynthesislatent,
      title={High-Resolution Image Synthesis with Latent Diffusion Models}, 
      author={Robin Rombach and Andreas Blattmann and Dominik Lorenz and Patrick Esser and Björn Ommer},
      year={2022},
      eprint={2112.10752},
      archivePrefix={arXiv},
      primaryClass={cs.CV},
      url={https://arxiv.org/abs/2112.10752}, 
}

@misc{podell2023sdxlimprovinglatentdiffusion,
      title={SDXL: Improving Latent Diffusion Models for High-Resolution Image Synthesis}, 
      author={Dustin Podell and Zion English and Kyle Lacey and Andreas Blattmann and Tim Dockhorn and Jonas Müller and Joe Penna and Robin Rombach},
      year={2023},
      eprint={2307.01952},
      archivePrefix={arXiv},
      primaryClass={cs.CV},
      url={https://arxiv.org/abs/2307.01952}, 
}

@misc{lipman2023flowmatchinggenerativemodeling,
      title={Flow Matching for Generative Modeling}, 
      author={Yaron Lipman and Ricky T. Q. Chen and Heli Ben-Hamu and Maximilian Nickel and Matt Le},
      year={2023},
      eprint={2210.02747},
      archivePrefix={arXiv},
      primaryClass={cs.LG},
      url={https://arxiv.org/abs/2210.02747}, 
}

@misc{liu2022flowstraightfastlearning,
      title={Flow Straight and Fast: Learning to Generate and Transfer Data with Rectified Flow}, 
      author={Xingchao Liu and Chengyue Gong and Qiang Liu},
      year={2022},
      eprint={2209.03003},
      archivePrefix={arXiv},
      primaryClass={cs.LG},
      url={https://arxiv.org/abs/2209.03003}, 
}

@misc{ho2022videodiffusionmodels,
      title={Video Diffusion Models}, 
      author={Jonathan Ho and Tim Salimans and Alexey Gritsenko and William Chan and Mohammad Norouzi and David J. Fleet},
      year={2022},
      eprint={2204.03458},
      archivePrefix={arXiv},
      primaryClass={cs.CV},
      url={https://arxiv.org/abs/2204.03458}, 
}

@misc{blattmann2023stablevideodiffusionscaling,
      title={Stable Video Diffusion: Scaling Latent Video Diffusion Models to Large Datasets}, 
      author={Andreas Blattmann and Tim Dockhorn and Sumith Kulal and Daniel Mendelevitch and Maciej Kilian and Dominik Lorenz and Yam Levi and Zion English and Vikram Voleti and Adam Letts and Varun Jampani and Robin Rombach},
      year={2023},
      eprint={2311.15127},
      archivePrefix={arXiv},
      primaryClass={cs.CV},
      url={https://arxiv.org/abs/2311.15127}, 
}

@misc{liu2024sorareviewbackgroundtechnology,
      title={Sora: A Review on Background, Technology, Limitations, and Opportunities of Large Vision Models}, 
      author={Yixin Liu and Kai Zhang and Yuan Li and Zhiling Yan and Chujie Gao and Ruoxi Chen and Zhengqing Yuan and Yue Huang and Hanchi Sun and Jianfeng Gao and Lifang He and Lichao Sun},
      year={2024},
      eprint={2402.17177},
      archivePrefix={arXiv},
      primaryClass={cs.CV},
      url={https://arxiv.org/abs/2402.17177}, 
}

@misc{singer2022makeavideotexttovideogenerationtextvideo,
      title={Make-A-Video: Text-to-Video Generation without Text-Video Data}, 
      author={Uriel Singer and Adam Polyak and Thomas Hayes and Xi Yin and Jie An and Songyang Zhang and Qiyuan Hu and Harry Yang and Oron Ashual and Oran Gafni and Devi Parikh and Sonal Gupta and Yaniv Taigman},
      year={2022},
      eprint={2209.14792},
      archivePrefix={arXiv},
      primaryClass={cs.CV},
      url={https://arxiv.org/abs/2209.14792}, 
}

@misc{ho2022classifierfreediffusionguidance,
      title={Classifier-Free Diffusion Guidance}, 
      author={Jonathan Ho and Tim Salimans},
      year={2022},
      eprint={2207.12598},
      archivePrefix={arXiv},
      primaryClass={cs.LG},
      url={https://arxiv.org/abs/2207.12598}, 
}

@misc{karras2022elucidatingdesignspacediffusionbased,
      title={Elucidating the Design Space of Diffusion-Based Generative Models}, 
      author={Tero Karras and Miika Aittala and Timo Aila and Samuli Laine},
      year={2022},
      eprint={2206.00364},
      archivePrefix={arXiv},
      primaryClass={cs.CV},
      url={https://arxiv.org/abs/2206.00364}, 
}

@misc{chung2024cfgmanifoldconstrainedclassifierfree,
      title={CFG++: Manifold-constrained Classifier Free Guidance for Diffusion Models}, 
      author={Hyungjin Chung and Jeongsol Kim and Geon Yeong Park and Hyelin Nam and Jong Chul Ye},
      year={2024},
      eprint={2406.08070},
      archivePrefix={arXiv},
      primaryClass={cs.CV},
      url={https://arxiv.org/abs/2406.08070}, 
}

@misc{liu2023compositionalvisualgenerationcomposable,
      title={Compositional Visual Generation with Composable Diffusion Models}, 
      author={Nan Liu and Shuang Li and Yilun Du and Antonio Torralba and Joshua B. Tenenbaum},
      year={2023},
      eprint={2206.01714},
      archivePrefix={arXiv},
      primaryClass={cs.CV},
      url={https://arxiv.org/abs/2206.01714}, 
}

@misc{bai2024zigzagdiffusionsamplingdiffusion,
      title={Zigzag Diffusion Sampling: Diffusion Models Can Self-Improve via Self-Reflection}, 
      author={Lichen Bai and Shitong Shao and Zikai Zhou and Zipeng Qi and Zhiqiang Xu and Haoyi Xiong and Zeke Xie},
      year={2024},
      eprint={2412.10891},
      archivePrefix={arXiv},
      primaryClass={cs.CV},
      url={https://arxiv.org/abs/2412.10891}, 
}

@misc{xu2023restartsamplingimprovinggenerative,
      title={Restart Sampling for Improving Generative Processes}, 
      author={Yilun Xu and Mingyang Deng and Xiang Cheng and Yonglong Tian and Ziming Liu and Tommi Jaakkola},
      year={2023},
      eprint={2306.14878},
      archivePrefix={arXiv},
      primaryClass={cs.LG},
      url={https://arxiv.org/abs/2306.14878}, 
}

@misc{lugmayr2022repaintinpaintingusingdenoising,
      title={RePaint: Inpainting using Denoising Diffusion Probabilistic Models}, 
      author={Andreas Lugmayr and Martin Danelljan and Andres Romero and Fisher Yu and Radu Timofte and Luc Van Gool},
      year={2022},
      eprint={2201.09865},
      archivePrefix={arXiv},
      primaryClass={cs.CV},
      url={https://arxiv.org/abs/2201.09865}, 
}

@misc{lu2022dpmsolverfastodesolver,
      title={DPM-Solver: A Fast ODE Solver for Diffusion Probabilistic Model Sampling in Around 10 Steps}, 
      author={Cheng Lu and Yuhao Zhou and Fan Bao and Jianfei Chen and Chongxuan Li and Jun Zhu},
      year={2022},
      eprint={2206.00927},
      archivePrefix={arXiv},
      primaryClass={cs.LG},
      url={https://arxiv.org/abs/2206.00927}, 
}

@article{Lu_2025,
   title={DPM-Solver++: Fast Solver for Guided Sampling of Diffusion Probabilistic Models},
   volume={22},
   ISSN={2731-5398},
   url={http://dx.doi.org/10.1007/s11633-025-1562-4},
   DOI={10.1007/s11633-025-1562-4},
   number={4},
   journal={Machine Intelligence Research},
   publisher={Springer Science and Business Media LLC},
   author={Lu, Cheng and Zhou, Yuhao and Bao, Fan and Chen, Jianfei and Li, Chongxuan and Zhu, Jun},
   year={2025},
   month=jun, pages={730–751} }

@misc{lin2024commondiffusionnoiseschedules,
      title={Common Diffusion Noise Schedules and Sample Steps are Flawed}, 
      author={Shanchuan Lin and Bingchen Liu and Jiashi Li and Xiao Yang},
      year={2024},
      eprint={2305.08891},
      archivePrefix={arXiv},
      primaryClass={cs.CV},
      url={https://arxiv.org/abs/2305.08891}, 
}

@misc{saharia2022photorealistictexttoimagediffusionmodels,
      title={Photorealistic Text-to-Image Diffusion Models with Deep Language Understanding}, 
      author={Chitwan Saharia and William Chan and Saurabh Saxena and Lala Li and Jay Whang and Emily Denton and Seyed Kamyar Seyed Ghasemipour and Burcu Karagol Ayan and S. Sara Mahdavi and Rapha Gontijo Lopes and Tim Salimans and Jonathan Ho and David J Fleet and Mohammad Norouzi},
      year={2022},
      eprint={2205.11487},
      archivePrefix={arXiv},
      primaryClass={cs.CV},
      url={https://arxiv.org/abs/2205.11487}, 
}

@misc{ju2023directinversionboostingdiffusionbased,
      title={Direct Inversion: Boosting Diffusion-based Editing with 3 Lines of Code}, 
      author={Xuan Ju and Ailing Zeng and Yuxuan Bian and Shaoteng Liu and Qiang Xu},
      year={2023},
      eprint={2310.01506},
      archivePrefix={arXiv},
      primaryClass={cs.CV},
      url={https://arxiv.org/abs/2310.01506}, 
}

@misc{zhao2023unipcunifiedpredictorcorrectorframework,
      title={UniPC: A Unified Predictor-Corrector Framework for Fast Sampling of Diffusion Models}, 
      author={Wenliang Zhao and Lujia Bai and Yongming Rao and Jie Zhou and Jiwen Lu},
      year={2023},
      eprint={2302.04867},
      archivePrefix={arXiv},
      primaryClass={cs.LG},
      url={https://arxiv.org/abs/2302.04867}, 
}

@Inbook{Hairer2006,
author="Hairer, Ernst
and Wanner, Gerhard
and Lubich, Christian",
title="Backward Error Analysis and Structure Preservation",
bookTitle="Geometric Numerical Integration: Structure-Preserving Algorithms for Ordinary Differential Equations",
year="2006",
publisher="Springer Berlin Heidelberg",
address="Berlin, Heidelberg",
pages="337--388",
isbn="978-3-540-30666-5",
doi="10.1007/3-540-30666-8_9",
url="https://doi.org/10.1007/3-540-30666-8_9"
}

@misc{li2024hunyuandit,
      title={Hunyuan-DiT: A Powerful Multi-Resolution Diffusion Transformer with Fine-Grained Chinese Understanding}, 
      author={Zhimin Li and Jianwei Zhang and Qin Lin and Jiangfeng Xiong and Yanxin Long and Xinchi Deng and Yingfang Zhang and Xingchao Liu and Minbin Huang and Zedong Xiao and Dayou Chen and Jiajun He and Jiahao Li and Wenyue Li and Chen Zhang and Rongwei Quan and Jianxiang Lu and Jiabin Huang and Xiaoyan Yuan and Xiaoxiao Zheng and Yixuan Li and Jihong Zhang and Chao Zhang and Meng Chen and Jie Liu and Zheng Fang and Weiyan Wang and Jinbao Xue and Yangyu Tao and Jianchen Zhu and Kai Liu and Sihuan Lin and Yifu Sun and Yun Li and Dongdong Wang and Mingtao Chen and Zhichao Hu and Xiao Xiao and Yan Chen and Yuhong Liu and Wei Liu and Di Wang and Yong Yang and Jie Jiang and Qinglin Lu},
      year={2024},
      eprint={2405.08748},
      archivePrefix={arXiv},
      primaryClass={cs.CV}
}

@article{yang2024cogvideox,
  title={CogVideoX: Text-to-Video Diffusion Models with An Expert Transformer},
  author={Yang, Zhuoyi and Teng, Jiayan and Zheng, Wendi and Ding, Ming and Huang, Shiyu and Xu, Jiazheng and Yang, Yuanming and Hong, Wenyi and Zhang, Xiaohan and Feng, Guanyu and others},
  journal={arXiv preprint arXiv:2408.06072},
  year={2024}
}

@misc{wallace2023diffusionmodelalignmentusing,
      title={Diffusion Model Alignment Using Direct Preference Optimization}, 
      author={Bram Wallace and Meihua Dang and Rafael Rafailov and Linqi Zhou and Aaron Lou and Senthil Purushwalkam and Stefano Ermon and Caiming Xiong and Shafiq Joty and Nikhil Naik},
      year={2023},
      eprint={2311.12908},
      archivePrefix={arXiv},
      primaryClass={cs.CV},
      url={https://arxiv.org/abs/2311.12908}, 
}

@misc{sabour2024alignstepsoptimizingsampling,
      title={Align Your Steps: Optimizing Sampling Schedules in Diffusion Models}, 
      author={Amirmojtaba Sabour and Sanja Fidler and Karsten Kreis},
      year={2024},
      eprint={2404.14507},
      archivePrefix={arXiv},
      primaryClass={cs.CV},
      url={https://arxiv.org/abs/2404.14507}, 
}

@article{wang2023modelscope,
      title={Modelscope text-to-video technical report},
      author={Wang, Jiuniu and Yuan, Hangjie and Chen, Dayou and Zhang, Yingya and Wang, Xiang and Zhang, Shiwei},
      journal={arXiv preprint arXiv:2308.06571},
      year={2023}
    }

@misc{kingma2023variationaldiffusionmodels,
      title={Variational Diffusion Models}, 
      author={Diederik P. Kingma and Tim Salimans and Ben Poole and Jonathan Ho},
      year={2023},
      eprint={2107.00630},
      archivePrefix={arXiv},
      primaryClass={cs.LG},
      url={https://arxiv.org/abs/2107.00630}, 
}

@misc{kirstain2023pickapicopendatasetuser,
      title={Pick-a-Pic: An Open Dataset of User Preferences for Text-to-Image Generation}, 
      author={Yuval Kirstain and Adam Polyak and Uriel Singer and Shahbuland Matiana and Joe Penna and Omer Levy},
      year={2023},
      eprint={2305.01569},
      archivePrefix={arXiv},
      primaryClass={cs.CV},
      url={https://arxiv.org/abs/2305.01569}, 
}

@misc{ghosh2023genevalobjectfocusedframeworkevaluating,
      title={GenEval: An Object-Focused Framework for Evaluating Text-to-Image Alignment}, 
      author={Dhruba Ghosh and Hanna Hajishirzi and Ludwig Schmidt},
      year={2023},
      eprint={2310.11513},
      archivePrefix={arXiv},
      primaryClass={cs.CV},
      url={https://arxiv.org/abs/2310.11513}, 
}

@article{yuan2024chronomagic,
  title={Chronomagic-bench: A benchmark for metamorphic evaluation of text-to-time-lapse video generation},
  author={Yuan, Shenghai and Huang, Jinfa and Xu, Yongqi and Liu, Yaoyang and Zhang, Shaofeng and Shi, Yujun and Zhu, Rui-Jie and Cheng, Xinhua and Luo, Jiebo and Yuan, Li},
  journal={Advances in Neural Information Processing Systems},
  volume={37},
  pages={21236--21270},
  year={2024}
}

@article{wu2023human,
  title={Human Preference Score v2: A Solid Benchmark for Evaluating Human Preferences of Text-to-Image Synthesis},
  author={Wu, Xiaoshi and Hao, Yiming and Sun, Keqiang and Chen, Yixiong and Zhu, Feng and Zhao, Rui and Li, Hongsheng},
  journal={arXiv preprint arXiv:2306.09341},
  year={2023}
}

@misc{xu2023imagerewardlearningevaluatinghuman,
      title={ImageReward: Learning and Evaluating Human Preferences for Text-to-Image Generation}, 
      author={Jiazheng Xu and Xiao Liu and Yuchen Wu and Yuxuan Tong and Qinkai Li and Ming Ding and Jie Tang and Yuxiao Dong},
      year={2023},
      eprint={2304.05977},
      archivePrefix={arXiv},
      primaryClass={cs.CV},
      url={https://arxiv.org/abs/2304.05977}, 
}

@misc{schuhmann2022laion5bopenlargescaledataset,
      title={LAION-5B: An open large-scale dataset for training next generation image-text models}, 
      author={Christoph Schuhmann and Romain Beaumont and Richard Vencu and Cade Gordon and Ross Wightman and Mehdi Cherti and Theo Coombes and Aarush Katta and Clayton Mullis and Mitchell Wortsman and Patrick Schramowski and Srivatsa Kundurthy and Katherine Crowson and Ludwig Schmidt and Robert Kaczmarczyk and Jenia Jitsev},
      year={2022},
      eprint={2210.08402},
      archivePrefix={arXiv},
      primaryClass={cs.CV},
      url={https://arxiv.org/abs/2210.08402}, 
}

@misc{radford2021learningtransferablevisualmodels,
      title={Learning Transferable Visual Models From Natural Language Supervision}, 
      author={Alec Radford and Jong Wook Kim and Chris Hallacy and Aditya Ramesh and Gabriel Goh and Sandhini Agarwal and Girish Sastry and Amanda Askell and Pamela Mishkin and Jack Clark and Gretchen Krueger and Ilya Sutskever},
      year={2021},
      eprint={2103.00020},
      archivePrefix={arXiv},
      primaryClass={cs.CV},
      url={https://arxiv.org/abs/2103.00020}, 
}

@inproceedings{zhang2018perceptual,
  title={The Unreasonable Effectiveness of Deep Features as a Perceptual Metric},
  author={Zhang, Richard and Isola, Phillip and Efros, Alexei A and Shechtman, Eli and Wang, Oliver},
  booktitle={CVPR},
  year={2018}
}

@misc{nilsson2020understandingssim,
      title={Understanding SSIM}, 
      author={Jim Nilsson and Tomas Akenine-Möller},
      year={2020},
      eprint={2006.13846},
      archivePrefix={arXiv},
      primaryClass={eess.IV},
      url={https://arxiv.org/abs/2006.13846}, 
}

@misc{sauer2023adversarialdiffusiondistillation,
      title={Adversarial Diffusion Distillation}, 
      author={Axel Sauer and Dominik Lorenz and Andreas Blattmann and Robin Rombach},
      year={2023},
      eprint={2311.17042},
      archivePrefix={arXiv},
      primaryClass={cs.CV},
      url={https://arxiv.org/abs/2311.17042}, 
}

@misc{xu2025goodseedmakesgood,
      title={Good Seed Makes a Good Crop: Discovering Secret Seeds in Text-to-Image Diffusion Models}, 
      author={Katherine Xu and Lingzhi Zhang and Jianbo Shi},
      year={2025},
      eprint={2405.14828},
      archivePrefix={arXiv},
      primaryClass={cs.CV},
      url={https://arxiv.org/abs/2405.14828}, 
}

@misc{samuel2023generatingimagesrareconcepts,
      title={Generating images of rare concepts using pre-trained diffusion models}, 
      author={Dvir Samuel and Rami Ben-Ari and Simon Raviv and Nir Darshan and Gal Chechik},
      year={2023},
      eprint={2304.14530},
      archivePrefix={arXiv},
      primaryClass={cs.CV},
      url={https://arxiv.org/abs/2304.14530}, 
}

@misc{mao2024lotterytickethypothesisdenoising,
      title={The Lottery Ticket Hypothesis in Denoising: Towards Semantic-Driven Initialization}, 
      author={Jiafeng Mao and Xueting Wang and Kiyoharu Aizawa},
      year={2024},
      eprint={2312.08872},
      archivePrefix={arXiv},
      primaryClass={cs.CV},
      url={https://arxiv.org/abs/2312.08872}, 
}

@misc{zhou2025goldennoisediffusionmodels,
      title={Golden Noise for Diffusion Models: A Learning Framework}, 
      author={Zikai Zhou and Shitong Shao and Lichen Bai and Shufei Zhang and Zhiqiang Xu and Bo Han and Zeke Xie},
      year={2025},
      eprint={2411.09502},
      archivePrefix={arXiv},
      primaryClass={cs.LG},
      url={https://arxiv.org/abs/2411.09502}, 
}

@misc{poyuan2023syntheticshiftsinitialseed,
      title={Synthetic Shifts to Initial Seed Vector Exposes the Brittle Nature of Latent-Based Diffusion Models}, 
      author={Mao Po-Yuan and Shashank Kotyan and Tham Yik Foong and Danilo Vasconcellos Vargas},
      year={2023},
      eprint={2312.11473},
      archivePrefix={arXiv},
      primaryClass={cs.CV},
      url={https://arxiv.org/abs/2312.11473}, 
}

@misc{yang2025texttoimagerectifiedflowplugandplay,
      title={Text-to-Image Rectified Flow as Plug-and-Play Priors}, 
      author={Xiaofeng Yang and Cheng Chen and Xulei Yang and Fayao Liu and Guosheng Lin},
      year={2025},
      eprint={2406.03293},
      archivePrefix={arXiv},
      primaryClass={cs.CV},
      url={https://arxiv.org/abs/2406.03293}, 
}

@misc{wu2023tuneavideooneshottuningimage,
      title={Tune-A-Video: One-Shot Tuning of Image Diffusion Models for Text-to-Video Generation}, 
      author={Jay Zhangjie Wu and Yixiao Ge and Xintao Wang and Weixian Lei and Yuchao Gu and Yufei Shi and Wynne Hsu and Ying Shan and Xiaohu Qie and Mike Zheng Shou},
      year={2023},
      eprint={2212.11565},
      archivePrefix={arXiv},
      primaryClass={cs.CV},
      url={https://arxiv.org/abs/2212.11565}, 
}

@misc{shen2024rethinkingspatialinconsistencyclassifierfree,
      title={Rethinking the Spatial Inconsistency in Classifier-Free Diffusion Guidance}, 
      author={Dazhong Shen and Guanglu Song and Zeyue Xue and Fu-Yun Wang and Yu Liu},
      year={2024},
      eprint={2404.05384},
      archivePrefix={arXiv},
      primaryClass={cs.CV},
      url={https://arxiv.org/abs/2404.05384}, 
}

@misc{hong2024smoothedenergyguidanceguiding,
      title={Smoothed Energy Guidance: Guiding Diffusion Models with Reduced Energy Curvature of Attention}, 
      author={Susung Hong},
      year={2024},
      eprint={2408.00760},
      archivePrefix={arXiv},
      primaryClass={cs.CV},
      url={https://arxiv.org/abs/2408.00760}, 
}

@inproceedings{chen_2024, series={MM ’24},
   title={SATO: Stable Text-to-Motion Framework},
   url={http://dx.doi.org/10.1145/3664647.3681034},
   DOI={10.1145/3664647.3681034},
   booktitle={Proceedings of the 32nd ACM International Conference on Multimedia},
   publisher={ACM},
   author={chen, Wenshuo and Xiao, Hongru and Zhang, Erhang and Hu, Lijie and Wang, Lei and Liu, Mengyuan and Chen, Chen},
   year={2024},
   month=Oct, pages={6989–6997},
   collection={MM ’24} }

@misc{ning2025dctdiffintriguingpropertiesimage,
      title={DCTdiff: Intriguing Properties of Image Generative Modeling in the DCT Space}, 
      author={Mang Ning and Mingxiao Li and Jianlin Su and Haozhe Jia and Lanmiao Liu and Martin Beneš and Wenshuo Chen and Albert Ali Salah and Itir Onal Ertugrul},
      year={2025},
      eprint={2412.15032},
      archivePrefix={arXiv},
      primaryClass={cs.CV},
      url={https://arxiv.org/abs/2412.15032}, 
}

@misc{jia2025physicsinformedrepresentationalignmentsparse,
      title={Physics-Informed Representation Alignment for Sparse Radio-Map Reconstruction}, 
      author={Haozhe Jia and Wenshuo Chen and Zhihui Huang and Lei Wang and Hongru Xiao and Nanqian Jia and Keming Wu and Songning Lai and Bowen Tian and Yutao Yue},
      year={2025},
      eprint={2501.19160},
      archivePrefix={arXiv},
      primaryClass={cs.CV},
      url={https://arxiv.org/abs/2501.19160}, 
}

@inproceedings{Chen_2025, series={MM ’25},
   title={ANT: Adaptive Neural Temporal-Aware Text-to-Motion Model},
   url={http://dx.doi.org/10.1145/3746027.3755168},
   DOI={10.1145/3746027.3755168},
   booktitle={Proceedings of the 33rd ACM International Conference on Multimedia},
   publisher={ACM},
   author={Chen, Wenshuo and Yu, Kuimou and Haozhe, Jia and Yuan, Kaishen and Huang, Zexu and Tian, Bowen and Lai, Songning and Xiao, Hongru and Zhang, Erhang and Wang, Lei and Yue, Yutao},
   year={2025},
   month=Oct, pages={9852–9861},
   collection={MM ’25} }

@misc{chen2025polarisprojectionorthogonalsquaresrobust,
      title={POLARIS: Projection-Orthogonal Least Squares for Robust and Adaptive Inversion in Diffusion Models}, 
      author={Wenshuo Chen and Haosen Li and Shaofeng Liang and Lei Wang and Haozhe Jia and Kaishen Yuan and Jieming Wu and Bowen Tian and Yutao Yue},
      year={2025},
      eprint={2512.00369},
      archivePrefix={arXiv},
      primaryClass={cs.CV},
      url={https://arxiv.org/abs/2512.00369}, 
}

@misc{yuan2025coemogensemanticallycoherentscalableemotional,
      title={CoEmoGen: Towards Semantically-Coherent and Scalable Emotional Image Content Generation}, 
      author={Kaishen Yuan and Yuting Zhang and Shang Gao and Yijie Zhu and Wenshuo Chen and Yutao Yue},
      year={2025},
      eprint={2508.03535},
      archivePrefix={arXiv},
      primaryClass={cs.CV},
      url={https://arxiv.org/abs/2508.03535}, 
}
\end{document}